\DeclarePairedDelimiter\floor{\lfloor}{\rfloor}
\newtheorem{lemma}{Lemma}
\newtheorem{theorem}{Theorem}
\newtheorem*{definition*}{Definition}
\newtheorem{example}{Example}
\newtheorem{assumption}{Assumption}
\newtheorem{remark}{Remark}
\DeclareMathOperator{\conv}{conv}
\DeclareMathOperator{\cone}{cone} 
\DeclareMathOperator{\rank}{rank}
\DeclareMathOperator{\logdet}{logdet}
\newcommand\undermat[2]{%
	\makebox[0pt][l]{$\smash{\underbrace{\phantom{%
					\begin{matrix}#2\end{matrix}}}_{\text{$#1$}}}$}#2}
\definecolor{brightpink}{rgb}{1.0, 0.0, 0.5}
\newcommand{\ngc}[1]{{\color{brightpink} (\textbf{NG:} #1)}}
\title{Simplex-Structured Matrix Factorization: Sparsity-based  
Identifiability and Provably Correct Algorithms} 
\date{}
\author{Maryam Abdolali$^{*}$ \qquad Nicolas Gillis\thanks{Emails: \{maryam.abdolali, nicolas.gillis\}@umons.ac.be. The authors acknowledge the support by the European Research Council (ERC starting grant no 679515), and by the Fonds de la Recherche Scientifique - FNRS and the Fonds Wetenschappelijk Onderzoek - Vlaanderen (FWO) under EOS Project no O005318F-RG47.} \\   
	Department of Mathematics and Operational Research \\ 
	Facult\'e Polytechnique, Universit\'e de Mons \\ 
	Rue de Houdain 9, 7000 Mons, Belgium
}
\begin{document}

\maketitle

\begin{abstract}
In this paper, we provide novel algorithms with identifiability guarantees for simplex-structured matrix factorization (SSMF), a generalization of nonnegative matrix factorization. Current state-of-the-art algorithms that provide identifiability results for SSMF rely on the sufficiently scattered condition (SSC) which requires the data points to be well spread within the convex hull of the basis vectors. The conditions under which our proposed algorithms recover the unique decomposition is in most cases much weaker than the SSC.  We only require to have $d$ points on each facet of the convex hull of the basis vectors whose dimension is $d-1$. The key idea is based on extracting facets containing the largest number of points.  We illustrate the effectiveness of our approach on synthetic data sets and hyperspectral images, showing that it outperforms state-of-the-art SSMF algorithms as it is able to handle higher noise levels, rank deficient matrices, outliers, and input data that highly violates the SSC. 
\end{abstract}

\textbf{Keywords:} simplex-structured matrix factorization, 
nonnegative matrix factorization, sparsity, identifiability, uniqueness, minimum volume 


\section{Introduction}


Extracting meaningful underlying structures that are present in high-dimensional data sets is a key problem in machine learning, data mining, and signal processing. 
 Structured matrix factorization (SMF) is a general model for exploiting latent linear structures from data; see for example~\cite{UHZB14, fu2020nonconvex} and the references therein. 
Given a factorization rank $r$, 
SMF expresses the input matrix $X\in \mathbb{R}^{m \times n}$ as the product of two matrices $W \in \mathbb{R}^ {m \times r}$ and $H \in \mathbb{R}^{r \times n}$, with some restrictions on the structure of $W$ and/or $H$. 
 This paper focuses on a specific SMF model called simplex-structured matrix factorization (SSMF). 

\paragraph{Simplex-structured matrix factorization} 
Given an $m$-by-$n$ matrix $X$ (with $m$ dimensional data points as columns) and an integer $r$, SSMF looks for an $m$-by-$r$ matrix $W$ whose columns are the basis vectors, and an $r$-by-$n$ matrix $H$ containing the mixing weights such that $X \approx WH$ and with the property that each column of $H$ belongs to the unit simplex, that is, $H(:,j) \in \Delta^r$ for all $j$ where 
\[
\Delta^r 
= \left\{ x \in \mathbb{R}^r \ \Big| \ x \geq 0, \sum_{i=1}^r x_i = 1 \right\} . 
\] 
In the exact case, that is, $X=WH$, each column of $X$ belongs to the convex hull generated by the columns of $W$, that is, 
\[
\conv(X) \quad \subseteq \quad \conv(W), 
\] 
where $\conv(X)  =  \{ x \ | \ x = Xh, h \in \Delta^n \}$. 
SSMF is a generalization of nonnegative matrix factorization (NMF), an SMF problem where $W$ and $H$ are required to be nonnegative, 
while $X$ is nonnegative as well. 
The main advantage of NMF over other SMFs such as the PCA/SVD is its interpretability when the factors $W$ and $H$ have a physical meaning; see~\cite{cichocki2009nonnegative, gillis2014and, fu2018nonnegative} and the references therein.  
In the exact case, NMF can be formulated as an SSMF problem using a simple scaling of the columns of $X$ and $W$. 
In fact,   
defining\footnote{We assume that the columns of $X$ and $W$ are different from zero otherwise they can be discarded.} 
$D_X$ as the diagonal matrix with $(D_X)_{ii} = ||X(:,i)||_1$ for all~$i$, we have 
\[
\underbrace{X (D_X)^{-1}}_{X'} 
\quad = \quad 
\underbrace{ W (D_W)^{-1} }_{W'} 
\; 
\underbrace{ D_W H (D_X)^{-1} }_{H'} . 
\] 
Since the entries of each column of $X'$ and $W'$ sum to one, 
and since $X'(:,j) = W'H'(:,j)$ for all $j$, 
the entries of the columns of $H'$ must also sum to one, that is, $H'(:,j) \in \Delta^r$ for all $j$. In fact, letting $e$ be the  vector of all ones of appropriate dimension, we have $e^\top = e^\top X'  = e^\top W'H' = e^\top H'$. 
Note that SSMF is a constrained variant of semi-NMF which only requires  the factor $H$ to be nonnegative; see~\cite{gillis2015exact} and the references therein.

\paragraph{Applications}

Let us discuss in more details two major applications of SSMF: blind hyperspectral unmixing,  and topic modeling. 
We refer the interested reader to~\cite{Wu2017ASM} and the references therein for more applications.  

A hyperspectral image is a data cube that consists of hundreds of two dimensional spatial images that are acquired at different contiguous wavelengths (known as spectral bands). In other terms, for each spatial location, that is, for each pixel, a hyperspectral image records a so-called spectral signature (the intensity of light depending on the wavelength).   
These images have a vast variety of applications in remote sensing, military surveillance, and environmental monitoring \cite{matteoli2010tutorial}. Due to the limited spatial resolution of hyperspectral sensors, a pixel may contain a mixture of light radiance from the materials located in the captured scene. This led to the development of a family of algorithms for extracting pure materials that are present in the image, known as \textit{endmembers}, and specifying the \textit{abundance} of these materials in each pixel. These algorithms are known as hyperspectral unmixing (HU). Under the linear mixing assumption, HU can be modeled as an SSMF problem; see~\cite{bioucas2012hyperspectral, ma2013signal} and the references therein. 
Constructing the matrix $X$ by stacking the spectral signature of the pixels as its columns, each column of $W$ can be interpreted as the spectral signature of an endmember, and each column of the matrix $H$ represents the abundance of the endmembers in the corresponding pixel. 

Another essential application of SSMF is text mining \cite{arora2012learning, huang2018learning, huang2016anchor}. A commonly used approach for modeling documents in natural language processing is the bag-of-words model where each document is represented by a vector that contains the frequency of occurrences of a predefined set of words~\cite{wallach2006topic}. 
Hence, a collection of documents form a matrix $X$ where the $(i,j)$th element indicates the frequency of the $i$th word in the $j$th document. Extracting latent topic patterns across the documents and categorizing the documents according to the extracted topics is an essential task when processing textual information. By applying SSMF on the given document matrix, it is decomposed as the product $WH$ of two matrices, where each column of $W$ can be interpreted as a hidden topic,  and each column of $H$ can be regarded as the proportion of the topics discussed in the corresponding document.

\paragraph{Identifiability} In many applications, a crucial question about SSMF is when the factors $W$ and $H$ can be uniquely recovered. 
SSMF never has a unique solution, unless some additional constraints are imposed on the factors $W$ and/or $H$. In fact, if there exists a polytope $\conv(W)$ containing the columns of $X$, then any larger polytope containing $\conv(W)$ leads to another solution of SSMF. 
Suppose $X$ is generated by multiplying the ground truth factors $W_t$ and $H_t$, where columns of $H_t$ belong to the unit simplex. 
Two crucial questions are: 
\begin{enumerate}

\item Under what conditions are the factors  $W_t$ and $H_t$ uniquely identifiable (up to trivial ambiguities such as permutation)? 
 
\item Does there exist a (polynomial-time) algorithm able to recover these ground truth factors $W_t$ and $H_t$? 

\end{enumerate}

Many works have studied these questions, leading to weaker and weaker conditions on the factors $W_t$ and/or $H_t$ that lead to uniqueness; see Section~\ref{relatedworks} for more details. 
Given that $W_t$ is identifiable, the identifiability of $H_t$ follows from well-known results:  $H_t$ is unique if and only if all columns of $X$ are located on $k$-dimensional faces of $\conv(W_t)$ having exactly $k+1$ vertices~\cite{sun2011underdetermined}. 
 When $W_t$ is full column rank, then $H_t$ is always unique as this condition is always met. 
 This is the reason why the identifiability results for SSMF are focused on the identification of $W_t$. In the remainder of this paper, we also only focus on the identifiability of $W_t$ in SSMF.



%
%

\paragraph{Contribution and outline of the paper} 


In this paper, we answer the two above questions in a novel way. 
First, in Section~\ref{relatedworks}, we review the main SSMF algorithms and identifiability results.  
Then, the main contributions of this paper are presented in the next four sections:  
\begin{enumerate}

 \item In Section~\ref{algoassumptions}, we provide a new identifiability conditions for SSMF, referred to as the facet-based conditions (FBC), that rely on the sparsity of $H$, by requiring to have $d = \rank(X)$ data points on each facet\footnote{A facet of a $d$-dimensional polytope is a $(d-1)$-dimensional face of that polytope. For example, in two dimensions, a polytope is a polygon and its facets are the segments.} 
 of $\conv(W)$; see Theorem~\ref{th:identifSSMFFBC}.  
As we will see, this condition is in most cases much weaker than the current state-of-the-art identifiability conditions that rely on the data points being sufficiently spread within $\conv(W)$.


\item In Section~\ref{algo}, we propose and study a first algorithm, dubbed brute-force facet-based identification (BFPI), for SSMF. 
BFPI looks for a polytope enclosing the data points by maximizing the number of points on each facet of that polytope. 
It relies on solving an optimization problem in the dual space. 	We provide an identifiability theorem for BFPI under the FBC (Theorem~\ref{th:identFPI}). 

\item In Section~\ref{algoseq}, we present a greedy variant for BFPI, namely GFPI, 
better suited for solving practical problems. GFPI extracts the facets of $\conv(W)$ containing the  largest number of data points sequentially by solving mixed integer programs (MIPs). 
We explain how GFPI is able to handle noise, rank deficient $W$'s, and outliers. We also provide an identifiability theorem for GFPI under the FBC (Theorem~\ref{th:indentGFPI}). 
	
	\item In Section~\ref{numexp}, we show on numerous numerical experiments that GFPI outperforms the current state-of-the-art SSMF algorithms. 
	In fact, GFPI allows us to recover the ground truth factor $W_t$ in much more difficult scenarios, while being less sensitive to noise and outliers. 	
	

\end{enumerate}

 




\section{Related Works: SSMF algorithms and identifiability}  \label{relatedworks}

Among the current approaches with identifiability guarantees for SSMF,  the two main ones are arguably separable NMF~\cite{arora2012computing, arora2016computing}, 
and simplex volume minimization~\cite{miao2007endmember}. 

 \paragraph{Separability}  
 
 
Separable NMF (SNMF) relies on the  separability assumption. 
It requires that each column of $W$ is present as a column of $X$, 
that is, that there exists an index set $\mathcal{K}$ such that $W = X(:,\mathcal{K})$. 
Equivalently, if separability holds, $H$ contains the identity as a submatrix. Separability is referred to as the pure-pixel assumption in 
HU~\cite{bioucas2012hyperspectral}, and to the anchor word assumption in topic modeling~\cite{arora2012computing}. 

  The separability assumption allows for efficient algorithms (that is, running in polynomial time) that are robust in the presence of noise; see~\cite{gillis2014and} and the references therein. 
  An instrumental algorithm to tackle separable NMF is the successive projection algorithm (SPA) introduced in~\cite{araujo2001successive}, and proved to be robust to noise in~\cite{gillis2013fast}. 
However, separability is a rather strong condition and might not hold in many applications.

 \paragraph{Minimum Volume, and Sufficiently Scattered Condition}  
    
To overcome this limitation, the Minimum-Volume (Min-Vol) framework was proposed which does not rely on the existence of the columns of $W$ in the data set. 
Min-Vol looks for a simplex that encloses the data points and simultaneously has the smallest possible volume. It can be formulated as follows~\cite{fu2015blind, lin2015identifiability}  
\begin{equation} \tag{Min-Vol} \label{eq:minvol}
\min_{W, H} \det(W^\top W) \quad 
\text{ such that } \quad X = WH \; \text{ and } \; H(:,j) \in \Delta^r \text{ for all } j. 
\end{equation} 
When the separability assumption is violated, 
Min-Vol is significantly superior to SNMF. 
Identifiability of Min-Vol requires $H$ to satisfy the sufficiently scattered condition (SSC), while $\rank(W) = r$. 
For a matrix $H \in \mathbb{R}_+^{r \times n}$ to satisfy the SSC, the columns of $H$ must be sufficiently scattered in $\Delta^r$ in order for their conical hull $\cone(H) = \{ y \ | \  y = Hx, x \geq 0 \}$ to  contain the second-order cone  \mbox{$\mathcal{C} = \{x \in \mathbb{R}^r_+ \big| e^\top x \geq \sqrt{r-1}||x||_2\}$}. 
	The SSC is a much more relaxed condition than separability, see Figure~\ref{overal} for an illustration. We refer the reader to~\cite{fu2015blind, fu2018identifiability, fu2018nonnegative} for more discussion on the SSC and the identifiability of SSMF. 
\begin{figure}[ht!]
	\begin{center}
		\includegraphics[width=\textwidth]{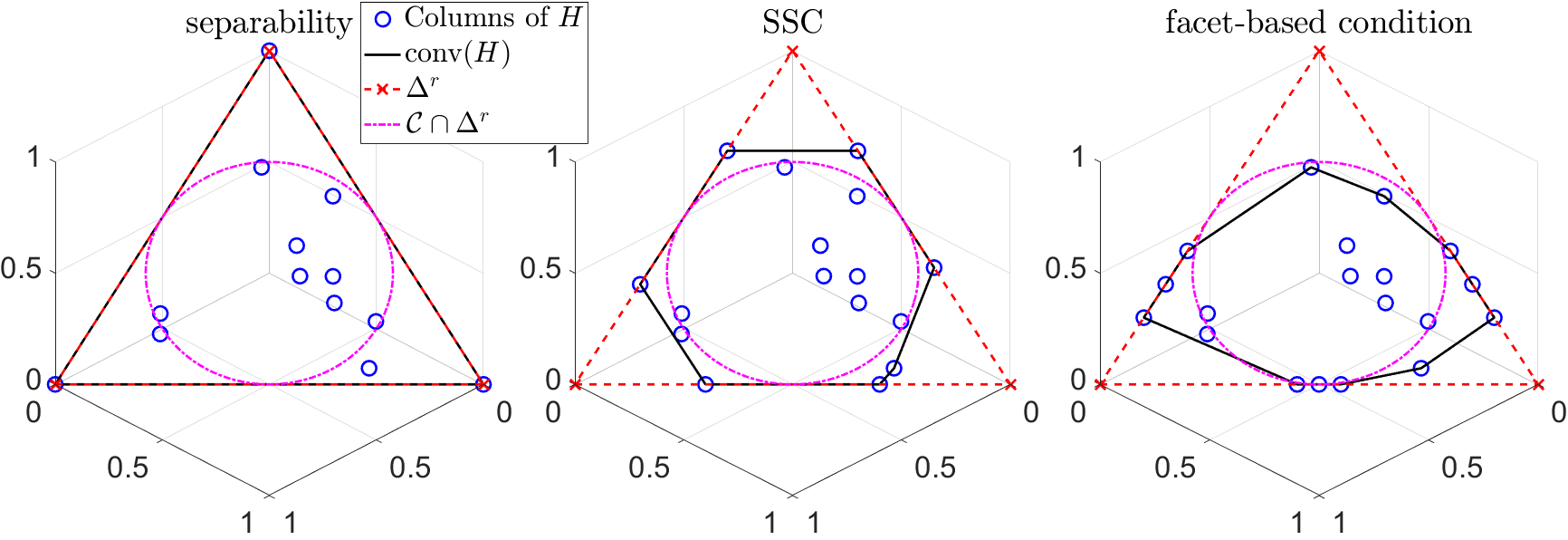}   
		\caption{Comparison of separability (left), SSC (middle), and our facet-based condition (right) for the matrix $H$ whose columns lie on the unit simplex.     
		On the left, separable NMF, as well as Min-Vol and FPI, will be able to uniquely identify $W$. 
	On the middle, separable NMF fails while 
	Min-Vol will uniquely identify $W$. Our approach may fail since the data points are also enclosed in another triangle containing six data points on its segments (there are only $r-1=2$ columns of $H$ on each facet of $\Delta^r$). 
			On the right, Min-Vol fails while FPI will be able to uniquely identify $W$. The reason Min-Vol fails is because the  triangle with minimum volume containing  the data points does not coincide with $\Delta^r$.  However, the only triangle with three data points on each segment and containing all data points is $\Delta^r$, which explains why FPI works. } 
		\label{overal}
	\end{center}
\end{figure} 

However, \ref{eq:minvol} is a difficult optimization problem and, as far as we know, most methods are based on standard non-linear optimization schemes (such as projected gradient methods) and come with no global optimality guarantees. Hence although  \ref{eq:minvol} allows for identifiability, it is still an open problem to provide an algorithm that solves the problem up to global optimality, in polynomial time; see the discussion in~\cite{fu2018nonnegative}. There exist non-ploynomial time algorithms for \ref{eq:minvol}; see the next paragraph. 



There are three main weaknesses for  \ref{eq:minvol}:  
\begin{enumerate}
	\item It requires $W$ to be full column rank. For example, in three dimensions, it can only identify three vertices. 
	
	\item It does not take advantage of the fact that, in many applications, most data points are usually located on the facets of the convex hull of the columns of $W$. In fact, in most applications, most columns of $H$ are sparse. 
	Minimum-volume NMF only uses the columns of $X$ that are not contained in the convex hull of the other columns, that is, it only uses the vertices of $\conv(X)$. 
	We believe this is a crucial information to take into account, and will lead to more robust approaches: we not only want to be able to reconstruct each data point, but also that as many points as possible are located on the facets of $\conv(W)$. 
	
	\item The SSC, although much milder than separability, is still a rather strong condition. It might not be satisfied in highly mixed scenarios; for example when a column of $W$ is not present in a sufficiently large proportion in sufficiently many pixels; see Figure~\ref{overal} (right) for an example.  
	
\end{enumerate}

In Section~\ref{algo}, we will provide a new weak condition for identifiability, namely the FBC. 
 In a nutshell, the FBC only requires to have $r$ data points on each facet of $\conv(W)$. (Note that the SSC implies that there are at least $r-1$ data points on each of these facets.) 
 Figure~\ref{overal} highlights the different identifiability conditions of the matrix $H$ in the case $r = 3$.  \\ 

Improving algorithmic designs for SNMF and Min-Vol is usually the main concern of the majority of recent studies; see for example~\cite{esser2012convex, 
recht2012factoring, 
arora2013practical, 
fu2015self,
gillis2014successive, 
leplat2019minimum, 
leplat2020blind}. In this paper, we take another direction, 
and consider new identfiability conditions, along with provably correct algorithms.

\paragraph{Algorithms based on facet identification} 

As mentioned before, 
our model and algorithm that will be presented in Section~\ref{algo} 
is based on the identification of the facets  of $\conv(W)$. 
There are few representative works that are based on similar ideas.  

Ge and Zou~\cite{GeZouNMF} introduced the concept of subset-separability which relaxes the separability condition. 
A factorization $X = WH$ is subset-separable if each column of $W$ is the unique intersection point of a subset of filled facets. A facet is filled if there is at least one point in the interior of the convex hull of the columns in $W$ corresponding to that facet or if the facet is exactly a vertex of $W$ (referred to as singleton set). 
 Based on this condition, they proposed the face-intersect algorithm to identify the filled facets of $\conv(W)$, and then their intersections corresponding to the columns of $W$.  
This algorithm is based on finding all facets by enumerating through all columns of $X$. 
The facets are identified using the following fact:  each point can be expressed as a convex combination of other points where the nonzero contributions correspond to points lying on the same facet. Hence, for each data point, the facet that it is lying on is identified, which leads to $n$ candidate facets. 
After eliminating the false positive facets (the ones which do not contain enough points) and close redundant facets, the intersection of the facets are determined as the basis matrix $W$. 
This algorithm requires the data points which are not in the lower dimensional faces to be in general positions, so that no random subset of points looks like a filled facet.  
The intuition behind our approach is related to these ideas. 
However our proposed algorithm will be completely different and our assumptions will be weaker: 
we do not require the facets to be filled, and do not put a general position condition on the points within the polytope $\conv(W)$. Moreover, as far as we know, the approach of Zou ang Ge is rather theoretical, and has not been used in real-world applications. 

Lin et al.~\cite{lin2015fast} proposed an algorithm that looks for the  simplex enclosing the data points by determining the $r$ associated facets, and then  calculating the vertices of that simplex (that is, the columns of $W$) by finding the intersection of the facets. 
Their approach is referred to as Hyperplane-based Craig-simplex-identiﬁcation (HyperCSI). 
In contrast to the previous approach, which produces many candidate facets, 
this approach generates exactly $r$ facets. The algorithm for identifying the facets relies on SPA~\cite{araujo2001successive}. 
First, SPA is utilized to estimate the $r$ \emph{purest} samples in $\conv(X)$ which are the points closest to the (unknown) vertices of $\conv(W)$ but not necessarily very close to facets. Let $\hat{W}$ denote the matrix whose columns contain these points. The $r$ facets are initially estimated as  $\mathcal{\hat{F}}_i =  \text{aff}( \{\hat{W}(:,1),\dots,\hat{W}(:,r)\}\backslash \{\hat{W}(:,i)\})$ for $i=1,…,r$ where $\text{aff}$
represents the affine hull. The orientational difference between the ground-truth facet and the estimated facet is reduced by finding \emph{active} samples that are close to the estimated facets. 
It was proven that in the noiseless setting, and as the number of columns of $X$ goes to infinity, that is, $n \rightarrow \infty$, the simplex identified by HyperCSI is exactly the minimum-volume simplex. The approach is simple and computationally efficient, however, as the purity decreases, the points selected by SPA might not be proper estimation of the purest samples and this makes the algorithm perform poorly; see the experimental results in Sections~\ref{noiseless_exp} and~\ref{noisy_exp}. 

In \cite{lin2017maximum}, Lin et al.\ proposed a different geometric approach for SSMF that is based on fitting a maximum-volume ellipsoid  inscribed in the convex hull of the data points. 
They refer to their algorithm as maximum volume inscribed ellipsoid (MVIE). 
They show that, under the SSC, the MVIE touches every facet of $\conv(W)$ which allows them to recover these facets, and then $W$. 
However, computing the MVIE requires to first compute all facets of $\conv(W)$, which is NP-hard in general (the number of facets can be exponential in the number of columns of $W$).  The second step uses semidefinite programming to compute the MVIE. 
As opposed to most algorithms for \ref{eq:minvol}, MVIE is guaranteed to recover $W$ in the noiseless case. However, the limitations of Min-Vol still hold here (see the discussion in the previous paragraph). 
Moreover, MVIE relies on facet enumeration algorithms that are sensitive to noise and outliers; see 
Section~\ref{numexp} for numerical experiments.  
This approach was recently improved by using a first-order method to solve the semidefinite program, and a different post-processing of the MVIE solution to recover $W$~\cite{lin2020nonnegative}. 


In~\cite{cohen2019identifiability}, authors provide identifiability results when the input matrix $H$ is sufficiently sparse. This result also applies to SSMF: it has a unique solution if on each subspace spanned by all but one column of $W$, there are $\floor{\frac{r(r-2)}{r-k}}+1$ data points with spark $r$ (that is, any subset of $r-1$ columns is linearly independent). 
However, this is a theoretical result, with no algorithm to tackle the problem. Moreover, this result does not take nonnegativity into account, and requires much more points on each facet than our facet-based condition. \\

In summary, as far as we know, algorithms for SSMF based on the identification of the facets of $\conv(W)$ have not been very successful in practice because they are either theoretically oriented, 
or they rely on strong conditions and are sensitive to noise.

\paragraph{Summary}


Table~\ref{related} summarizes the conditions under which SSMF algorithms  recover the ground truth factor $W$, in the noiseless case. 
It highlights five conditions: 
number of points per facet of $\conv(W)$ (this is essentially a sparsity condition on $H$),  
separability, 
SSC, 
full column rank of matrix $W$, and whether the number of samples needs to go to infinity.   
\begin{center}
	\begin{table*}[!ht]
		\begin{center}
			\caption{Indentifiability conditions for different SSMF algorithms in the exact case. SSMF is the model $X = WH$ where $W \in \mathbb{R}^{m \times r}$ and $H(:,j) \in \Delta^r$ for all $j$. We denote $d = \rank(X) \leq r$.} 
			\label{related}  
			\small\addtolength{\tabcolsep}{-1pt}
			\begin{tabular}{|c||ccccc|}
				\hline
				 & \thead{\begin{tabular}{@{}c@{}}\# points \\ per facets \end{tabular}} & \thead{separability} & \thead{SSC} & \thead{$d = r$} & $n \rightarrow \infty$ \\ \hline
				\begin{tabular}{@{}c@{}}Separable NMF \\ (SNMF)~\cite{araujo2001successive} \end{tabular} & $d-1$ & \checkmark & \checkmark & \checkmark & - \\ \hline
				\begin{tabular}{@{}c@{}}Simplex Volume Minimization\\ (Min-Vol)~\cite{miao2007endmember} \end{tabular} & $d-1$ & - & \checkmark & \checkmark & - \\ \hline
				\begin{tabular}{@{}c@{}}Maximize Volume Inscribed Ellipsoid\\ (MVIE)~\cite{lin2017maximum} \end{tabular} & $d-1$ & - & \checkmark & \checkmark & - \\ \hline
				\begin{tabular}{@{}c@{}}Hyperplane-based Craig-simplex-identiﬁcation\\ (HyperCSI)~\cite{lin2015fast} \end{tabular} & $d-1$ & - &\checkmark & \checkmark & \checkmark\\ \hline
				\begin{tabular}{@{}c@{}}Facet-based Polytope Identification\\ (BFPI and GFPI), this paper \end{tabular} & $d$ & - & - & - & -\\ \hline
			\end{tabular} 
		\end{center}
	\end{table*}
\end{center}

Our proposed algorithms, BFPI and GFPI, 
require $d = \rank(X)$ points per facet, which is only one additional data point on each facet compared to the other algorithms that require additional strong conditions such as the SSC or $\rank(W) = r$.  
Hence BFPI and GFPI will not always be stronger than Min-Vol 
(see Figure~\ref{overal} for an example), but they will be in most practical cases. 
Interestingly, their robustness (that is, their ability to perform well in the presence of noise) will depend on the fact that the data points are well spread on the facets. This is rather natural: in the presence of noise, it will be harder to identify a facet containing only points that are very close to one another.

\section{Identifiability of SSMF under the faced-based conditions (FBC)} \label{algoassumptions}

Let us state the FBC. 

\begin{assumption}[Facet-based conditions (FBC)] \label{ass1} 
Let $X \in \mathbb{R}^{m \times n}_+$ with $d = \rank(X)$, 
and let $W \in \mathbb{R}^{m \times r}$ and 
$H \in \mathbb{R}^{r \times n}_+$ be such that $X = WH$ where 
\begin{enumerate}
   
    \item[a.] No column of $W$ is contained in the convex hull of the other columns of $W$, that is, $\conv(W)$ is a polytope with $r$ vertices given by the columns of $W$. 
	
    
    \item[b.] The columns of $H$ belong to the unit simplex, that is, $H(:,j) \in \Delta^r$ for $j=1,2,\dots,n$.

    \item[c.] Each facet of $\conv(W)$ contains at least $s \geq d$ distinct columns of $X$ and, among them, at least $d-1$ generate that facet (that is, the dimension of the convex hull of these $s$ columns is $d-2$).  
    
    \item[d.] There are strictly less than $s$ distinct columns of $X$ on every facet of $\conv(X)$ which is not a facet of $\conv(W)$. 
    
\end{enumerate} 
\end{assumption}

Let us comment on these assumptions. 
\begin{itemize}

    \item Assumption~\ref{ass1}.a is necessary for any identifiable SSMF model since a column of $W$ cannot be identified if it is located in the convex hull of the other columns (it could be discarded to have a decomposition with $r-1$ factors).  
    
    Since $X = WH$,     we have $d = \rank(X) \leq \rank(W) \leq r$. However as opposed to most previous works, we do not assume $d=r$ so that $\conv(W)$ may contain more vertices than the ambient dimension plus one; for example, it could be a quadrilateral in the plane as in Figure~\ref{dualsquare}. 
    
    \item Assumption~\ref{ass1}.b allows for $WH$ to be a SSMF. 
    For NMF, that is, when $X = WH$ with $W \geq 0$ and $H \geq 0$, 
    Assumption~\ref{ass1}.b can be assumed without loss of generality by using a simple scaling of the columns of $X$ and $W$; see the introduction. 

    \item     The key assumption is Assumption~2.c. 
    It implies a certain degree of sparsity of the columns of $H$: a column of $X$ is on a facet of $\conv(W)$ if the corresponding column of $H$ has at least one zero entry. Hence Assumption~2.c implies that each row of $H$ has $d$ zero entries, and this condition is easy to check. 
    
    \item Assumption~\ref{ass1}.d will allow us to make the decomposition unique. For example, assume the data points are located on the boundary of a hexagon in two dimensions with $r=3$; see Figure~\ref{unique} for an illustration. There are many possible triangles that contain these points, and hence the factorization is not unique. Minimum-volume NMF picks the unique triangle with the smallest volume, while SSMF under the FBC picks the unique triangle having three points on each segment.   
    \begin{figure}[h!]
	\begin{center}
		\includegraphics[width=0.4\textwidth]{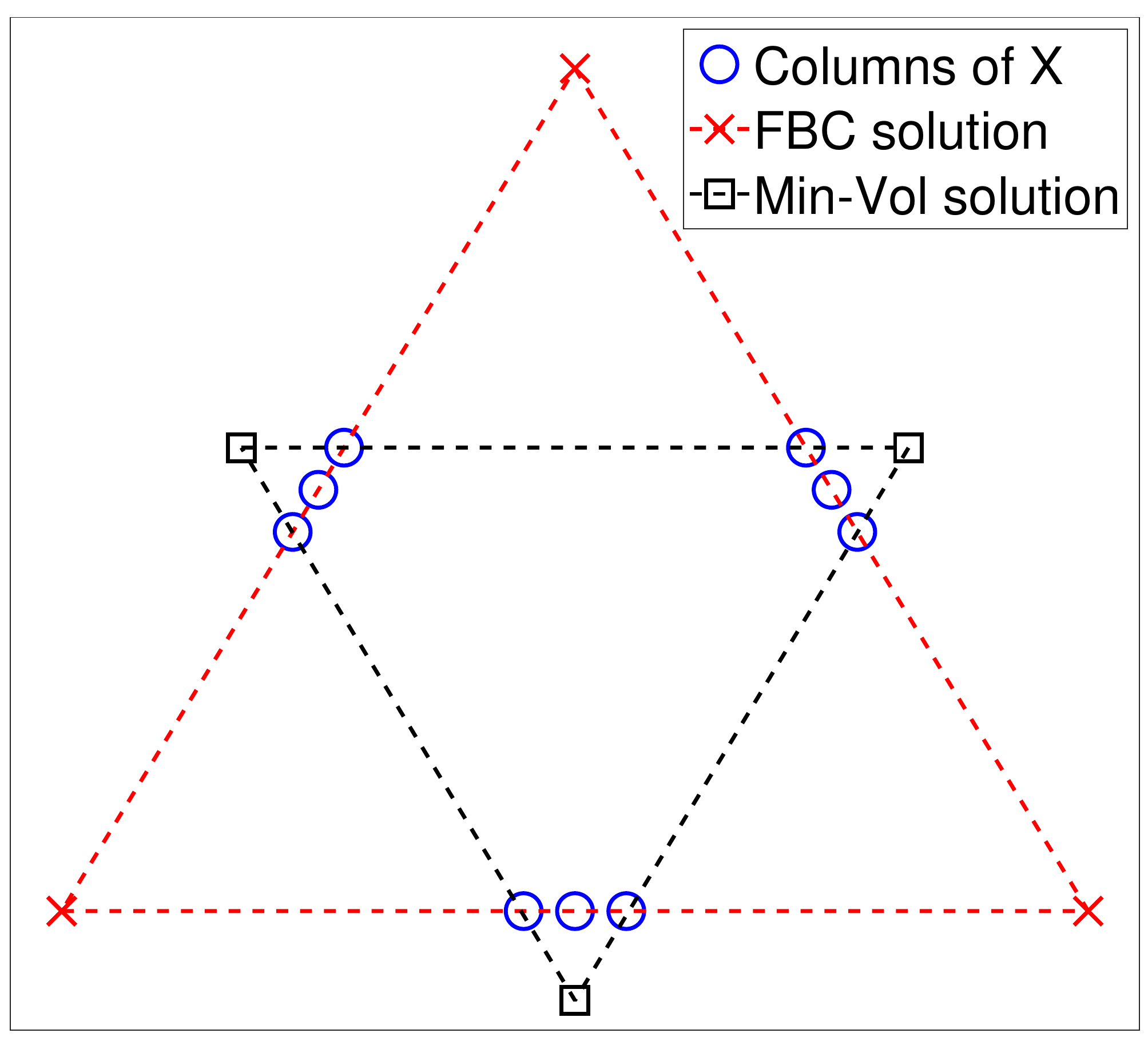}   
		\caption{Illustration of the non-uniqueness of SSMF. 
		SSMF under the FBC achieves uniqueness based on Assumption 1.d, and selects the triangle whose vertices are the red crosses, with three points on each segment. 
		Min-Vol selects the triangle whose vertices are the black squares, which has the smallest volume, but only two points on each segment.}  
		\label{unique}
	\end{center}
\end{figure}

   Under Assumption~\ref{ass1}.d, data points can be on the boundary of $\conv(X)$ as long as the number of such points on the same facet does not exceed the number of points on any of the facets of $\conv(W)$.
    We believe that this assumption will be met in most practical situations.  
    
    
    Assumption~\ref{ass1}.d is not easy to check as it requires to compute all facets of $\conv(X)$, and there could be exponentially many. 
    Note however that the SSC is NP-hard to check~\cite{huang2014non}. 
\end{itemize}

Compared to the assumption required for Min-Vol, our assumptions 
require one additional data points on each facet but does not require these data points to be well-spread on that facet. Moreover, we do not require $X$ to be of rank $r$. Note however that the well-spreadness of data points on a facet will influence the robustness to noise of our model; see Section~\ref{numexp}.

\begin{remark}[Separability vs.\ the FBC] 
 As opposed to the SSC, 
 Assumption~\ref{ass1} is not a generalization of separability because a separable matrix might not satisfy Assumption~\ref{ass1}.c. However, Assumption~\ref{ass1}.c could be relaxed as follows: either a facet of $\conv(W)$ satisfies Assumption~\ref{ass1}.c or its vertices are columns of $X$. In that case, our results still apply, using the same trick as in~\cite[Algorithm 5]{GeZouNMF}. 
We stick in this paper to Assumption~1.c for the simplicity of the presentation and because, in practice, it is not likely for a facet to contain all its vertices while not containing any point in its interior. 
For example,  in the hyperspectral unmixing application when $\rank(W) = r$, it would mean that
all endmembers but one are present as a column of $X$, 
and, except for the $d-1$ endmembers, all other pixels in the image contain some proportion of the last endmember not present as a column of $X$.  
Similarly, in the topic modeling application, this would mean that, except for the anchor words, all words are associated with the topic which does not have an anchor word in the data set. This is a very unlikely scenario in practice; also, this happens with probability zero under all reasonable probabilistic generative models we know of.  
\end{remark}

Before proving that the factor $W$ in SSMF is identifiable under the FBC (Assumption~\ref{ass1}), let us show the following lemma. 
\begin{lemma} \label{lem1}
 Let $X=WH$ satisfy Assumption~\ref{ass1}. Then every facet of $\conv(W)$ is a facet of $\conv(X)$.  
 \end{lemma}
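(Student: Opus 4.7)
The plan is to show that each facet of $\conv(W)$ shares its supporting affine hyperplane with a facet of $\conv(X)$. The essential tool is Assumption~\ref{ass1}.c, which places $d-1$ affinely independent columns of $X$ on each such facet. The argument breaks into a dimension calculation and then a face-lattice comparison between the two nested polytopes.

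First I would pin down the dimensions of both polytopes. Assumption~\ref{ass1}.c forces every facet of $\conv(W)$ to be $(d-2)$-dimensional (the $d-1$ affinely independent columns of $X$ generate it), so $\conv(W)$ itself has dimension $d-1$; equivalently, $\dim(\mathrm{aff}(W)) = d - 1$. Since each column of $X$ is a convex combination of the columns of $W$, $\mathrm{aff}(X) \subseteq \mathrm{aff}(W)$, and so $\dim(\mathrm{aff}(X)) \leq d - 1$. Combined with $\dim(\mathrm{aff}(X)) \geq \rank(X) - 1 = d - 1$, this pins $\dim(\mathrm{aff}(X)) = d - 1$, hence $\dim(\conv(X)) = d - 1$ as well.

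Next, fix a facet $F$ of $\conv(W)$ and let $H_F$ be the affine hyperplane such that $F = \conv(W) \cap H_F$. Since $\conv(X) \subseteq \conv(W)$, the hyperplane $H_F$ also supports $\conv(X)$, so $F' := \conv(X) \cap H_F$ is a face of $\conv(X)$. By Assumption~\ref{ass1}.c, $F$ contains $d-1$ affinely independent columns of $X$, which all lie in $F'$, giving $\dim(F') \geq d - 2$. Conversely $F' \subseteq F$ and $\dim(F) = d - 2$, so $\dim(F') = d - 2$. Because $\conv(X)$ has dimension $d - 1$, a $(d-2)$-dimensional face of $\conv(X)$ is by definition a facet; moreover $F$ and $F'$ are cut out by the same supporting hyperplane $H_F$, so $\mathrm{aff}(F) = \mathrm{aff}(F')$. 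This is the sense in which every facet of $\conv(W)$ is a facet of $\conv(X)$.

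The main obstacle is the bookkeeping around dimensions, in particular ruling out $\dim(\mathrm{aff}(X)) = d$ by using that $\mathrm{aff}(X) \subseteq \mathrm{aff}(W)$ and that $\conv(W)$ is only $(d-1)$-dimensional. A secondary subtlety is interpreting ``$F$ is a facet of $\conv(X)$'' carefully: $F$ may contain vertices of $\conv(W)$ that do not lie in $\conv(X)$, so the correspondence is identity of affine supporting hyperplanes (equivalently of the faces $\conv(W) \cap H_F$ and $\conv(X) \cap H_F$), not set equality.
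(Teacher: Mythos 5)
Your proof is correct and takes essentially the same route as the paper's, which argues in two lines that $\conv(X) \subseteq \conv(W)$ (Assumption~\ref{ass1}.b) together with the $d-1$ affinely independent columns of $X$ on each facet of $\conv(W)$ (Assumption~\ref{ass1}.c) forces each such facet to be a facet of $\conv(X)$. You have simply filled in the dimension bookkeeping and the supporting-hyperplane argument that the paper leaves implicit.
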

 \begin{proof}
 Assumptions~\ref{ass1}.b implies $\conv(X) \subseteq \conv(W)$, while each facet of $\conv(W)$ contains at least $d$ columns of $X$ whose convex hull has dimension $d-2$ (Assumptions~\ref{ass1}.c). This implies that every facet of $\conv(W)$ is a facet of $\conv(X)$. 
 \end{proof} 

The proof of Lemma~\ref{lem1} leads to an interesting observation: for SSMF to be identifiable, one needs to have at least $d-1$ data points on each facet of $\conv(W)$, otherwise it cannot be a facet of $\conv(X)$ and hence cannot be identified. In fact, one can check that both separability and the SSC imply this condition. 
The FBC only requires one additional data point on each of these facets.

\begin{theorem}[Uniqueness of $W$ in SSMF under the FBC] \label{th:identifSSMFFBC}
Let $X = WH$ satisfying the FBC (Assumption~\ref{ass1}).  
For any other factorization $X = \hat{W}\hat{H}$ satisfying the FBC, $\hat{W} = W\Pi$ where $\Pi \in \{0,1\}^{r \times r}$ is a permutation matrix. 
\end{theorem}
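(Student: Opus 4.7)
My plan is to prove the equality $\conv(W)=\conv(\hat W)$ and then read off $\hat W=W\Pi$ from Assumption~1.a, which identifies the vertex set of each polytope with the columns of the corresponding factor. The first step is to characterize the facet-defining hyperplanes of $\conv(W)$: by Lemma~\ref{lem1}, every facet of $\conv(W)$ lies on a facet-hyperplane of $\conv(X)$, and combining Assumptions~1.c and~1.d shows that a facet-hyperplane of $\conv(X)$ is a facet-hyperplane of $\conv(W)$ if and only if it carries at least $s$ columns of $X$. The analogous characterization holds for $\conv(\hat W)$ with its own value $\hat s$. Assuming without loss of generality $s\le\hat s$, every facet-hyperplane of $\conv(\hat W)$ carries at least $\hat s\ge s$ columns of $X$, so it is also a facet-hyperplane of $\conv(W)$; intersecting the corresponding half-spaces then yields $\conv(W)\subseteq\conv(\hat W)$.

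The central step is to match individual facets across the two polytopes. For each hyperplane $H$ supporting a facet of $\conv(\hat W)$ (and hence, by the above, of $\conv(W)$), set $F_W=\conv(W)\cap H$ and $F_{\hat W}=\conv(\hat W)\cap H$. Then $F_W\subseteq F_{\hat W}$ trivially, and every column of $X$ lying on $H$ belongs to the smaller set $F_W$, because $\conv(X)\subseteq\conv(W)$. Applying Assumption~1.c to $\conv(\hat W)$, the $X$-columns on $F_{\hat W}$ generate this facet, forcing $F_{\hat W}$ to equal the convex hull of the $X$-columns lying on $H$; since that convex hull is contained in $F_W$, we obtain $F_{\hat W}\subseteq F_W$ and hence $F_{\hat W}=F_W$. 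Consequently the boundary of $\conv(\hat W)$ is contained in $\conv(W)$, and since $\conv(\hat W)$ is the convex hull of its vertices (which lie on its boundary), $\conv(\hat W)\subseteq\conv(W)$. Combined with the previous inclusion this gives $\conv(W)=\conv(\hat W)$.

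Once the polytopes are equal, Assumption~1.a asserts that both $W$ and $\hat W$ have columns that are exactly the $r$ vertices of $\conv(W)=\conv(\hat W)$, so the column sets coincide and $\hat W=W\Pi$ for some permutation matrix $\Pi\in\{0,1\}^{r\times r}$.

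The main obstacle in this plan is the facet-matching step $F_{\hat W}\subseteq F_W$. A purely dimensional reading of Assumption~1.c (the $s$ columns only affinely span the facet-hyperplane) would be too weak: the argument actually requires the convex hull of the $X$-columns on $H$ to equal $F_{\hat W}$. This is the content of the clause ``at least $d-1$ of them generate that facet'' together with the parenthetical condition that the convex hull has dimension $d-2$, and this precise interpretation will be the technical fulcrum of the proof.
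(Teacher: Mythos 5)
Your opening inclusion is sound and is essentially the paper's own argument run in the contrapositive: assuming $s\le\hat s$, every facet of $\conv(\hat W)$ is, by Lemma~\ref{lem1} applied to $(\hat W,\hat H)$, a facet of $\conv(X)$ carrying at least $\hat s\ge s$ columns of $X$, so Assumption~\ref{ass1}.d for $(W,H)$ forces it to be a facet of $\conv(W)$. The paper phrases this as a direct contradiction (a facet of $\conv(\hat W)$ that is not a facet of $\conv(W)$ would violate Assumption~\ref{ass1}.d) and stops there; it never attempts to compare, as point sets, the two facets of $\conv(W)$ and $\conv(\hat W)$ lying on a common supporting hyperplane.

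The genuine gap is in your facet-matching step, exactly where you place the ``technical fulcrum.'' You need $F_{\hat W}$ to equal the convex hull of the columns of $X$ lying on $H$, and you justify this by reading ``at least $d-1$ of them generate that facet'' as saying that the convex hull of those columns \emph{is} the facet. The paper's own parenthetical in Assumption~\ref{ass1}.c rules this reading out: ``that is, the dimension of the convex hull of these $s$ columns is $d-2$.'' The assumption therefore only guarantees that the data points on a facet affinely span its supporting hyperplane; their convex hull is a full-dimensional subpolytope of the facet, not the facet itself. Concretely, for $r=3$ the points on an edge of the triangle $\conv(\hat W)$ may all lie in the middle third of that edge: they have dimension $d-2=1$, yet their convex hull is a proper subset of $F_{\hat W}$, so the chain $F_{\hat W}=\conv(\text{columns on }H)\subseteq F_W$ collapses and $F_{\hat W}\subseteq F_W$ does not follow. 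Note also that the symmetric counting argument cannot substitute for this step under your normalization $s\le\hat s$: a facet of $\conv(W)$ carries at least $s$ points, but Assumption~\ref{ass1}.d for $(\hat W,\hat H)$ only excludes non-facets of $\conv(\hat W)$ carrying at least $\hat s$ points, and $s$ may be strictly smaller than $\hat s$. To repair the argument you should follow the paper and work at the level of facets identified with their supporting hyperplanes --- showing that the facet-defining hyperplanes of the two polytopes coincide and concluding equality of the polytopes from that --- rather than trying to realize each facet as the convex hull of the data points it carries.
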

\begin{proof} 
Note that the FBC depends on the parameter $s \geq d$. 
Assume there exists two factorizations $X = WH$ and $X = \hat{W}\hat{H}$ satisfying the FBC (Assumption~\ref{ass1}), where the parameter $s=s_W$ for $WH$, 
and $s=s_{\hat{W}}$ for $\hat{W}\hat{H}$. Assume without loss of generality that $s_W \leq s_{\hat{W}}$. 
By definition, the columns of $W$ and $\hat{W}$ are the intersections of the facets of $\conv(W)$ and $\conv(\hat{W})$, respectively. For $W$ and $\hat{W}$ to have at least one column that do not coincide (up to permutation), there is at least one facet of $\conv(W)$ that is different from one facet of $\conv(\hat{W})$. Let $\hat{\mathcal{F}}$ be a facet of $\conv(\hat{W})$ that is not a facet of $\conv(W)$. 
By Lemma~\ref{lem1}, $\hat{\mathcal{F}}$ is a facet of $\conv(X)$. 
This is in contradiction with Assumption~\ref{ass1}.d for $(W,H)$: $\hat{\mathcal{F}}$ is a facet of $\conv(X)$ but not a facet of $\conv(W)$ while it contains $s_{\hat{W}} \geq s_W$ distinct data points. 
\end{proof}


\section{Brute-force facet-based polytope identification (BFPI)} \label{algo}

In this section, we describe our first proposed algorithm, namely BFPI; see Algorithm~\ref{alg:bruteforce}. 
The high-level geometric insight of the proposed FPI algorithm is to identify the facets of $\conv(W)$, given the data points. 
Although we will not implement BFPI, we believe the high level ideas within BFPI are key, and may be an important starting point for future algorithmic design, which is the reason why we present it here. It was our starting point to develop GFPI presented in the next section. 

\algsetup{indent=2em}
\begin{algorithm}[ht!]
\caption{Brute-force facet-based polytope identification (BFPI) for SSNMF} \label{alg:bruteforce}
\begin{algorithmic}[1]
\REQUIRE Data matrix $X \in \mathbb{R}^{m \times n}$ satisfying Assumption~\ref{ass1}, and parameter $s$. 

    \ENSURE The basis matrix $W$. \vspace{0.2cm} 
    
     {\emph{\% Step 1. Preprocessing}} 
     \STATE Remove the zero columns of $X$, and remove duplicated data points. 
    
    \STATE Remove from each column of $X$ their average $\bar{x} = \frac{1}{n} \sum_{i=1}^n X(:,j)$ which lies in the interior of $\conv(X)$. We have 
    \[
    X = WH \quad \iff \quad X - \bar{X} = X - [\bar{x} \dots \bar{x}] = \left(W - [\bar{x} \dots \bar{x}] \right) H,  
    \]
    since the entries in each column of $H$ sum to one. Note that this reduces the rank of $X-\bar{X}$ to $d = \rank(X)-1$ 
    since 0 belongs to the convex hull of its columns. 
    
    \STATE Reduce the dimension of the columns of $X - \bar{X}$ to a $(d-1)$-dimensional space, by constructing the matrix $\tilde{X} \in \mathbb{R}^{(d-1) \times n}$ as follows.  
    Given the compact SVD of $X - \bar{X} = U\Sigma V^\top$ where $U \in \mathbb{R}^{m \times (d-1)}$, $\Sigma \in \mathbb{R}^{(d-1) \times (d-1)}$ and $V \in \mathbb{R}^{n \times (d-1)}$, $U$ and $V$ having orthogonal columns, we take  
    \[
    \tilde{X} \;  = \;  U^\top (X - \bar{X})  = \Sigma V^\top.
    \]
    Let us denote $\tilde{W} = U^\top \left(W - [\bar{x} \dots \bar{x}] \right)$, so that 
    $\tilde{X} = \tilde{W} H$. \vspace{0.2cm}  
    
     {\emph{\% Step 2. Compute all vertices of $\conv(X)^*$}} 
    
    \STATE Compute all vertices of $\conv(X)^* = \{ \theta \ | \ \tilde{X}^\top \theta \leq e \} \subseteq \mathbb{R}^{d-1}$. Let us denote these vertices $\{\theta_i\}_{i = 1}^v$.  \vspace{0.2cm}  
		
		 {\emph{\% Step 3. Identify the vertices of $\conv(W)^*$}} 
		
    \STATE Identify the vertices corresponding to a facet in the primal that contains more than $s$ data points
    \[
    J = \left\{ i \ \Big| \ \big| \{ j \ | \ \tilde{X}(:,j)^\top \theta_i = 1 \} \big| \geq s, 1 \leq i \leq v \right\}. 
    \]
     The convex hull of $\{\theta_i\}_{i \in J}$ is the dual of the convex hull of $\tilde{W}$. \vspace{0.2cm}   
     
     
      {\emph{\% Step 4. Recover $\tilde{W}$ from the vertices of $\conv(\tilde{W})^*$}} 
     
  \STATE Recover $\tilde{W}$ by intersecting the facets $\{ x \ | \ x^\top \theta_i \leq 1\}$ for $i \in J$, that is, compute the dual of  $\conv\left( \{\theta_i\}_{i \in J} \right)$. \vspace{0.2cm}  
  
   {\emph{\% Step 5. Postprocess $\tilde{W}$ to recover $W$}} 
  
  \STATE Project $\tilde{W} \in \mathbb{R}^{(d-1) \times r}$ back to the original $m$-dimensional space: 
  \[
  W =  U \tilde{W} + [\bar{x} \dots \bar{x}] . 
  \] 
\end{algorithmic}
\end{algorithm}

\paragraph{Preliminaries} Let $d = \rank(W)$. 
The facets of the $(d-1)$-dimensional polytope $\conv(W)$ are the polytopes of dimension $d-2$ obtained as the intersection of $\conv(W)$ with a hyperplane. 
For a set $\mathcal{A}$ containing the origin in its interior, 
we define its dual as 
\[
\mathcal{A}^* \;  = \; \left\{ y \ | \ x^\top y \leq 1 \text{ for all } x \in \mathcal{A} \right\}. 
\]
 If $\mathcal{A}$ is a polytope, then $\mathcal{A}^*$ is also a polytope whose facets correspond to the vertices of $\mathcal{A}$, and vice versa. 
Moreover, it is easy to prove that if 
$\mathcal{A} \subseteq \mathcal{B}$, then $\mathcal{B}^* \subseteq \mathcal{A}^*$. We refer the reader to~\cite{Ziegler95} for more information on polytopes. 
In order to recover the facets of $\conv(W)$, the dual space will be considered such that the problem of searching for the facets of a polytope is replaced by the equivalent problem of finding the vertices of a polytope in the dual space.

\paragraph{Preprocessing}  
Before doing so, 
the first step of FPI is to make sure the origin belongs to $\conv(W)$ by removing $\bar{x} = \frac{1}{n} \sum_{j=1}^n X(:,j)$ from all data points. This does not change the structure of the SSMF problem: 
\[
X(:,j) - \bar{x} = WH(:,j) - \bar{x} = (W - \bar{x} e^\top) H(:,j), 
\]
since $e^\top H(:,j) = 1$ because $H(:,j) \in \Delta^r$ for all $j$.  To simplify the notation, let us denote $\bar{X} = \bar{x} e^\top$. 
Then, to have a full-dimensional problem, that is,  to have the dimension of $\conv(X)$ coincide with the dimension of the ambient space, we project $X-\bar{X}$ onto its $(d-1)$-dimensional column space.  
In fact, since $0 \in \conv\big(X-\bar{X}\big)$, the rank of $X-\bar{X}$ is equal to $d-1$, and this second preprocessing step  amounts to premultiplying $X-\bar{X}$ by a $(d-1)$-by-$m$ matrix obtained via the truncated SVD of $X-\bar{X}$ (see Algorithm~\ref{alg:bruteforce} for the details).  
This does not change the structure of the SSMF problem either, it simply premultiplies $X$ and $W$ by a matrix of rank $d-1$.  
 This is a standard preprocesing step in the SSMF literature; see for example~\cite{ma2013signal}. 

\paragraph{Dual approach}   

Let us denote the dual of $\conv(X)$ as  
\[
\conv(X)^* 
\; = \; \left\{ \theta \ | \ x^\top \theta \leq 1 \text{ for all } x \in \conv(X)  \right\} 
\; =\;  \left\{ \theta \ | \ X^\top \theta \leq e  \right\} . 
\] 
Since $\conv(X) \subseteq \conv(W)$, the dual of $\conv(W)$ is contained in $\conv(X)^*$. Let us illustrate this on a simple example.
\begin{example}
Let the columns of $W$ be the vertices of the square $[-1,1] \times [-1,-1]$, while 
\[
X = \left( \begin{array}{cccccccccccc}
-1 & -1  & -1  & -0.8 & -0.65  & -0.5 & -0.8 & -0.65 & -0.5 & 1 &  1  & 1 \\
0.8 & 0.65 & 0.5 &  1  & 1   &  1  &  -1 &  -1 &  -1 &  -0.8 & -0.65 & -0.5
\end{array} \right), 
\]
see Figure~\ref{dualsquare} for an illustration. 
The polygon $\conv(X)$ has 8 segments: 4 containing 3 data points, and 4  containing 2 data points.  
In the dual space, 4 of the vertices of $\conv(X)^*$ correspond to the 4  vertices of $\conv(W)^*$, that is, to the four segments of $\conv(W)$, while the other 4 correspond to the other 4 segments of $\conv(X)$. 
\begin{figure}[ht!]
\begin{center}
\includegraphics[width=0.9\textwidth]{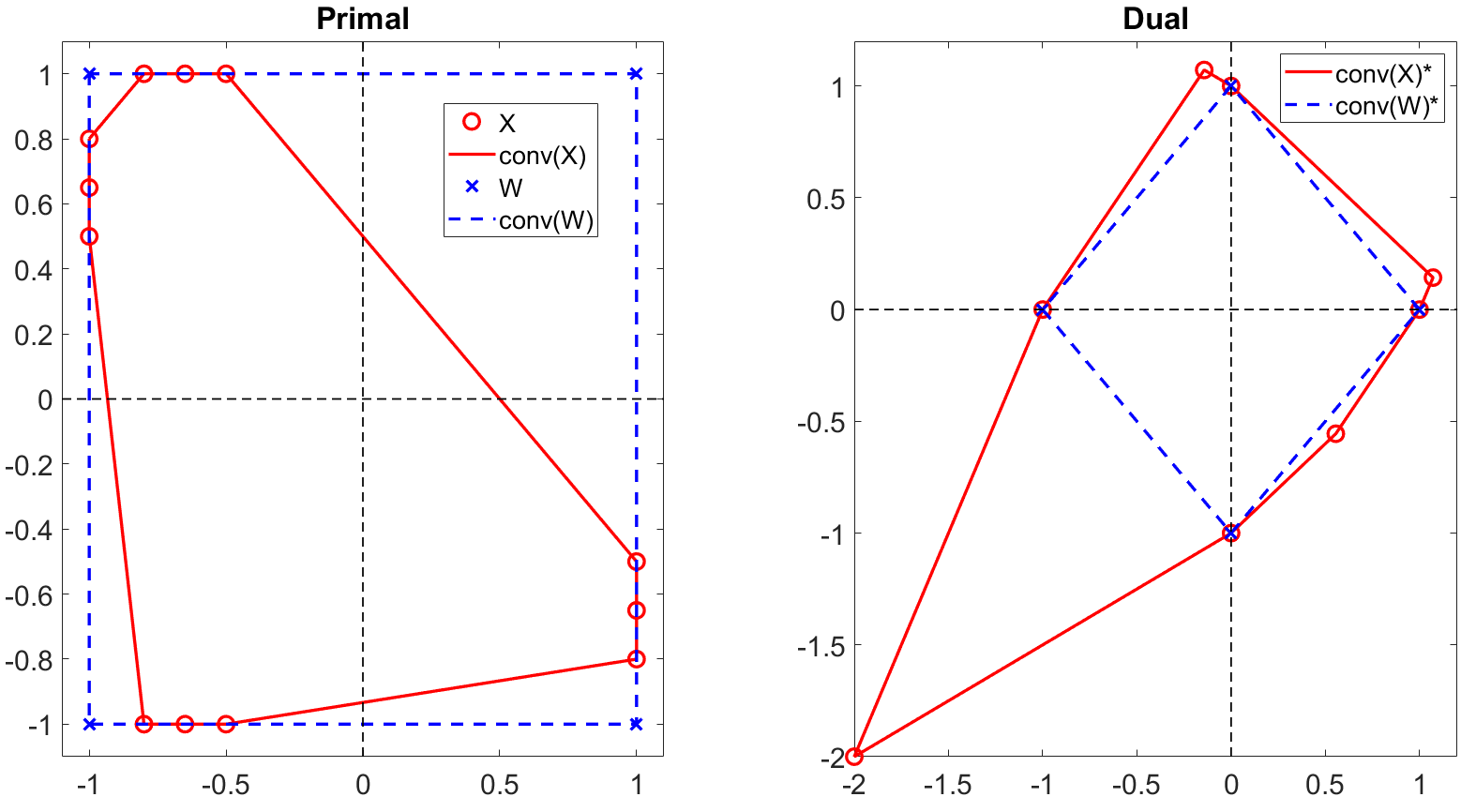}   
\caption{Illustration of the concept of duality to compute SSMF. 
On the left, this is the primal space where $\conv(X) \subseteq \conv(W)$.  
On the right, the circles are the vertices of $\conv(X)^*$ in the dual space corresponding to the segments of $\conv(X)$ in the primal. 
The crosses  are the vertices of $\conv(W)^*$ corresponding to the segments of $\conv(W)$ in the primal. This is the dual representation where $\conv(W)^* \subseteq \conv(X)^*$.  
\label{dualsquare} 
} 
\end{center}
\end{figure}
\end{example}


Our goal is to find the vertices of $\conv(X)^*$ that correspond to the vertices of $\conv(W)^*$, that is, the facets of $\conv(W)$. 
Under Assumption~\ref{ass1}.c, there are at least $d$ columns of $X$ on each facet of $\conv(W)$ whose convex hull has dimension $d-2$  (on Figure~\ref{dualsquare}, there are three points on each segment of $\conv(W)$). This implies that a subset of the vertices of $\conv(X)^*$ contains the vertices of $\conv(W)^*$, as shown in the following lemma. 
 \begin{lemma} \label{lem1bis}
 Let $X=WH$ satisfy Assumption~\ref{ass1}, and assume $X$ has been preprocessed as described in Algorithm~\ref{alg:bruteforce} so that $0 \in \conv(X)$ and  $X \in \mathbb{R}^{(d-1) \times n}$ where 
 $\rank(X) = d-1$. 
 Then the set of vertices of $\conv(X)^*$ contain all the vertices of $\conv(W)^*$. 
 \end{lemma}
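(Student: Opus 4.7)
The plan is to exploit the bijection between faces of a polytope and faces of its dual, together with Assumption~\ref{ass1}.c, to show that each vertex of $\conv(W)^*$ is pinned down uniquely by equality constraints coming from data points, and is therefore a vertex of $\conv(X)^*$.

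First, since $\conv(X)\subseteq \conv(W)$ and both contain $0$ in their interior (by construction of the preprocessing), duality reverses the inclusion: $\conv(W)^*\subseteq \conv(X)^*$. In particular every vertex $\theta^\ast$ of $\conv(W)^*$ lies in $\conv(X)^*$. It remains to verify that such a $\theta^\ast$ is actually an extreme point of $\conv(X)^*$, i.e.\ a vertex.

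Next I would use the standard dual correspondence: a vertex $\theta^\ast$ of $\conv(W)^*$ corresponds to a facet $\mathcal{F}$ of $\conv(W)$, and the affine hyperplane $\{x : x^\top \theta^\ast = 1\}$ is exactly the supporting hyperplane of $\mathcal{F}$. By Assumption~\ref{ass1}.c, $\mathcal{F}$ contains a set of columns $x_{j_1},\dots,x_{j_s}$ of $X$ (with $s\geq d$, noting that after preprocessing the working rank is $d-1$) whose convex hull has dimension $d-2$; in particular, among them one can pick $d-1$ columns that are affinely independent. All such columns satisfy $x_{j_k}^\top \theta^\ast = 1$.

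The key step is to promote this affine independence to linear independence in $\mathbb{R}^{d-1}$. Because $0$ lies in the interior of $\conv(X)\subseteq\conv(W)$, the origin does not lie in the affine hull of any facet; hence $0$ is not an affine combination of $x_{j_1},\dots,x_{j_{d-1}}$. Combined with affine independence, this forces linear independence: any linear relation $\sum c_k x_{j_k}=0$ must have $\sum c_k = 0$ (otherwise $0$ would be in their affine hull), and then affine independence forces all $c_k=0$. Thus the $(d-1)\times(d-1)$ matrix with rows $x_{j_k}^\top$ is nonsingular, so $\theta^\ast$ is the \emph{unique} solution in $\mathbb{R}^{d-1}$ of the linear system $x_{j_k}^\top \theta = 1$, $k=1,\dots,d-1$.

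Finally I would conclude: $\theta^\ast \in \conv(X)^*$ is determined by $d-1$ linearly independent active constraints among the defining inequalities $X(:,j)^\top\theta\leq 1$, which by the standard characterization of vertices of polyhedra in $\mathbb{R}^{d-1}$ makes $\theta^\ast$ a vertex of $\conv(X)^*$. Since $\theta^\ast$ was an arbitrary vertex of $\conv(W)^*$, every vertex of $\conv(W)^*$ appears among the vertices of $\conv(X)^*$. The only delicate point, and the one I would double-check carefully, is the linear-independence argument; once that is in hand, the vertex-by-active-constraints characterization closes the argument immediately.
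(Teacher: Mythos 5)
Your proof is correct. The paper disposes of this lemma in one line (``this follows from Lemma~\ref{lem1} and duality''): Lemma~\ref{lem1} shows, using Assumption~\ref{ass1}.c, that every facet of $\conv(W)$ is a facet of $\conv(X)$, and the abstract face correspondence of polar duality then turns facets into vertices. You rely on exactly the same key ingredient --- Assumption~\ref{ass1}.c supplying $d-1$ affinely independent data points on each facet of $\conv(W)$ --- but you execute the duality step differently: instead of invoking the face-lattice anti-isomorphism, you stay in the dual space and verify directly that $\theta^\ast$ is a basic feasible solution of $\{\theta : X^\top \theta \le e\}$, i.e.\ that it satisfies $d-1$ linearly independent active constraints. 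The payoff of your route is that it is self-contained and makes explicit the one step the abstract argument hides, namely the promotion of affine independence to linear independence via the fact that $0$ lies in the interior of $\conv(X)$ and hence off the affine hull of every facet; that step is exactly right (a linear relation $\sum_k c_k x_{j_k}=0$ with $\sum_k c_k \neq 0$ would place $0$ in that affine hull). The paper's route is shorter and reuses Lemma~\ref{lem1}, which it needs anyway for Theorem~\ref{th:identifSSMFFBC}. Both are valid; yours is the more elementary and more careful of the two.
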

 \begin{proof}
 This follows from Lemma~\ref{lem1} and duality.  
 \end{proof} 
Once the vertices of $\conv(X)^*$ are identified, we recover the vertices of $\conv(W)^*$ that correspond to the facets of $\conv(W)$ containing the largest number of data points. More precisely, under Assumption~\ref{ass1}, we have the following lemma. 
 \begin{lemma} \label{lem2} Let $X=WH$ satisfy Assumption~\ref{ass1}, and assume $X$ has been preprocessed as described in Algorithm~\ref{alg:bruteforce} so that $0 \in \conv(X)$, $X \in \mathbb{R}^{(d-1) \times n}$ where $\rank(X)=d-1$, and $X$ does not have duplicated columns.   
 Then  
 the set $\{ x \in \conv(W) \ | \ \theta^\top x = 1\}$ is a facet of  $\conv(W)$ if and only if  
 \begin{equation} \label{facet}
\theta \text{ is a vertex of } \conv(X)^* = \big\{ \theta \ | \ X^\top \theta \leq e \big\}  
\quad \text{ and } \quad 
\big| \{ j \ | \ X(:,j)^\top \theta = 1 \} \big| \geq s ,  
 \end{equation}
 where $|\mathcal{A}|$ denotes the cardinality of the set $\mathcal{A}$. 
 \end{lemma}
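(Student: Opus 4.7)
The plan is to prove both directions by polar duality between the faces of $\conv(X)$ and those of $\conv(X)^*$. The preprocessing in Algorithm~\ref{alg:bruteforce} makes $0$ the centroid of the columns of $X$; since $\rank(X) = d-1$, these columns affinely span $\mathbb{R}^{d-1}$ and so $0$ lies in the \emph{interior} of $\conv(X)$. This makes $\conv(X)^*$ a bounded polytope, and yields the standard face-lattice correspondence: a facet of $\conv(X)$ is of the form $\{x \in \conv(X) \ | \ \theta^\top x = 1\}$ for a unique vertex $\theta$ of $\conv(X)^*$, and conversely.

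For the ($\Rightarrow$) direction, suppose $F = \{x \in \conv(W) \ | \ \theta^\top x = 1\}$ is a facet of $\conv(W)$. Then the hyperplane $\{\theta^\top x = 1\}$ supports $\conv(W)$, so $\theta^\top x \leq 1$ for all $x \in \conv(W) \supseteq \conv(X)$; hence $\theta \in \conv(X)^*$. By Lemma~\ref{lem1}, $F$ is also a facet of $\conv(X)$, and polar duality identifies $\theta$ as the corresponding vertex of $\conv(X)^*$. Assumption~\ref{ass1}.c then supplies the at least $s$ distinct columns $X(:,j)$ lying on $F$, each satisfying $X(:,j)^\top \theta = 1$.

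For the ($\Leftarrow$) direction, suppose $\theta$ is a vertex of $\conv(X)^*$ such that $|\{j \ | \ X(:,j)^\top \theta = 1\}| \geq s$. By polar duality, $F_X := \{x \in \conv(X) \ | \ \theta^\top x = 1\}$ is a facet of $\conv(X)$, and it contains those $\geq s$ distinct data points. The contrapositive of Assumption~\ref{ass1}.d forces $F_X$ to be a facet of $\conv(W)$ as well. Because $F_X$ has affine dimension $d-2$ and lies in the hyperplane $\{\theta^\top x = 1\}$, that hyperplane equals the affine hull of the facet and therefore supports $\conv(W)$. Since $0 \in \conv(W)$ with $\theta^\top 0 = 0 < 1$, $\conv(W)$ lies on the side $\theta^\top x \leq 1$, so $\{x \in \conv(W) \ | \ \theta^\top x = 1\} = F_X$ is a facet of $\conv(W)$.

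The only delicate point is the initial claim that $0$ lies in the \emph{interior} of $\conv(X)$ after preprocessing (rather than merely in $\conv(X)$, which is what the lemma's hypothesis records), since without this the polar $\conv(X)^*$ may fail to be bounded and the face-lattice correspondence invoked above breaks down. This is what I expect to be the main technical obstacle, but it is handled by the centroid-plus-full-affine-span argument sketched above; the remainder of the proof is then just a careful bookkeeping of which hyperplane supports which polytope.
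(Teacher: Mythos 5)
Your proof is correct and follows essentially the same route as the paper's: the forward direction via Lemma~\ref{lem1} and Assumption~\ref{ass1}.c, and the reverse direction via polar duality and the contrapositive of Assumption~\ref{ass1}.d. You are in fact more careful than the paper on one point worth keeping: verifying that $0$ lies in the \emph{interior} of $\conv(X)$ (centroid of points that affinely span $\mathbb{R}^{d-1}$), which is what legitimizes the vertex--facet correspondence between $\conv(X)$ and $\conv(X)^*$ that the paper invokes without comment.
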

 \begin{proof}
 Let $\{ x \in \conv(W) \ | \ \theta^\top x = 1\}$ be a facet of $\conv(W)$. By Lemma~\ref{lem1}, 
$\theta$ must belong to $\conv(X)^*$, while, by Assumption~\ref{ass1}.c, facets of $\conv(W)$ contain more than $s \geq d$ columns of $X$. 

Let $\theta$ satisfy \eqref{facet} so that the set $\mathcal{F} = \{ x \in \conv(W) \ | \ \theta^\top x = 1\}$ contains $s$ columns of $X$. 
Since $\theta$ is a vertex of $\conv(X)^*$, the set 
$\mathcal{F}$ corresponds, by duality, to a facet of $\conv(X)$. 
By Assumption~1.c, the facets containing at least $s$ points must correspond to facets of $\conv(W)$.  
 \end{proof} 
 
Finally, $W$ is recovered by intersecting the facets of $\conv(X)$ containing more than $s$ data points.  
The proposed brute-force algorithm is presented in Algorithm~\ref{alg:bruteforce}.
The main step of Algorithm~\ref{alg:bruteforce} is a {vertex enumeration} problem in the {dual space}. 




\paragraph{Identifiability} 

Let us prove that, if $X=WH$ satisfies Assumption~\ref{ass1}, then Algorithm~\ref{alg:bruteforce} recovers $W$, up to permutation of its columns.  

\begin{theorem}[Recovery of $W$ by Algorithm~\ref{alg:bruteforce}] \label{th:identFPI}
Let $X=WH$ satisfy Assumption~\ref{ass1}. Then Algorithm~\ref{alg:bruteforce} recovers the columns of $W$ (up to permutation).  
\end{theorem}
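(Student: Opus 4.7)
The plan is to track the FBC through each of the five steps of Algorithm~\ref{alg:bruteforce} and invoke Lemmas~\ref{lem1}, \ref{lem1bis} and~\ref{lem2} at the appropriate moments. The proof reduces essentially to verifying that the preprocessing preserves Assumption~\ref{ass1} and that the vertex-enumeration/duality picture identifies exactly the facets of $\conv(W)$.

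First, I would show that Step~1 preserves the FBC after passing to a full-dimensional representation. Removing zero columns and duplicated columns does not alter $\conv(X)$, $\conv(W)$, their facets, or the number of \emph{distinct} data points on each facet, so Assumption~\ref{ass1} still holds. Centering by $\bar{x} \in \conv(X) \subseteq \conv(W)$ is a translation, so $\conv(X-\bar{X})$ and $\conv(W-\bar{X})$ are just translates of $\conv(X)$ and $\conv(W)$; the factorization $X - \bar{X} = (W - \bar{x}e^\top) H$ holds because $e^\top H(:,j)=1$. The SVD-based projection in Step~1 is an injective linear map on the column space of $X - \bar{X}$, so it maps $\conv(X - \bar{X})$ and $\conv(W - \bar{X})$ bijectively onto $\conv(\tilde{X})$ and $\conv(\tilde{W})$, preserving vertices, facets, and the assignment of data points to facets. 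Hence $(\tilde{W},H)$ satisfies Assumption~\ref{ass1} with the same $s$, with $0 \in \conv(\tilde{X})$ lying in the interior and $\rank(\tilde{X})=d-1$, putting us in the setting of Lemmas~\ref{lem1bis} and~\ref{lem2}.

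Next I handle Steps~2--4, which is where the duality work happens. By Lemma~\ref{lem1bis}, every vertex of $\conv(\tilde{W})^*$ appears among the vertices $\{\theta_i\}_{i=1}^v$ enumerated in Step~2. Step~3 then filters this set by selecting the index set $J$ of vertices whose corresponding supporting hyperplane contains at least $s$ columns of $\tilde{X}$. Lemma~\ref{lem2} gives the exact characterization: $\theta$ is a vertex of $\conv(\tilde{X})^*$ satisfying $|\{j : \tilde{X}(:,j)^\top \theta = 1\}| \geq s$ if and only if $\{x \in \conv(\tilde{W}) \mid \theta^\top x = 1\}$ is a facet of $\conv(\tilde{W})$. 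Consequently $\{\theta_i\}_{i \in J}$ is exactly the vertex set of $\conv(\tilde{W})^*$. Taking the dual again in Step~4 therefore reconstructs $\conv(\tilde{W})$ exactly, and by Assumption~\ref{ass1}.a its vertices are precisely the columns of $\tilde{W}$ (up to permutation), no column being redundant.

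Finally, Step~5 inverts the preprocessing: since $U$ has orthonormal columns and $\tilde{W} = U^\top(W - \bar{x}e^\top)$ with $W - \bar{x}e^\top$ lying in the column space of $X-\bar{X}$, the assignment $W = U\tilde{W} + \bar{x}e^\top$ recovers $W$ exactly, up to the same column permutation. Putting the three blocks together yields the claim. The main obstacle, conceptually, is Step~3: one must rule out that some spurious vertex of $\conv(\tilde{X})^*$ (corresponding to a facet of $\conv(\tilde{X})$ not inherited from $\conv(\tilde{W})$) sneaks into $J$, and this is exactly where Assumption~\ref{ass1}.d is used---without it, a facet of $\conv(X)$ strictly interior to $\conv(W)$ could accumulate $\geq s$ data points and the filter would admit an extraneous vertex. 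Everything else is bookkeeping around the preprocessing and the duality already established in the lemmas.
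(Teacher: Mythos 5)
Your proof is correct and follows essentially the same route as the paper's: verify that the preprocessing preserves Assumption~\ref{ass1} and the geometry, then use Lemmas~\ref{lem1}, \ref{lem1bis} and~\ref{lem2} to show that the filtered vertex set $\{\theta_i\}_{i\in J}$ is exactly the vertex set of $\conv(\tilde{W})^*$, so that dualizing back recovers $\conv(\tilde{W})$ and hence $W$. You simply spell out in more detail (correctly) the steps the paper compresses into two sentences, including the role of Assumption~\ref{ass1}.d in excluding spurious facets.
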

\begin{proof}
First, as already noted above, the prepossessing step does not change the geometry of the problem, that is, if $X = WH$ satisfies Assumption~\ref{ass1}, then $\tilde{X} = \tilde{W}H$ also satisfies Assumption~\ref{ass1}. Hence let us assume w.l.o.g.\ that $0 \in \conv(X)$ and $X \in \mathbb{R}^{(d-1) \times n}$ where $\rank(X) = d-1$. 

The rest of the proof follows from Lemmas~\ref{lem1} and~\ref{lem2}. 
By Lemma~\ref{lem1}, the vertices of $\conv(X)^*$ computed in step 4 of Algorithm~\ref{alg:bruteforce} correspond to facets of $\conv(X)$. 
 By Lemma~\ref{lem2}, only the facets of $\conv(X)$ corresponding to facets of $\conv(W)$ contain at least $s$ columns of $X$. 
\end{proof}

\paragraph{Computational cost} Algorithm~\ref{alg:bruteforce} may run in the worst-case in exponential time. The set $\conv(X)^*$ is an $(d-1)$-dimensional polytope defined by $n$ inequalities and can have exponentially many vertices, namely $O\left( \binom{n}{d-1} \right)$. 
In the next section, we propose a greedy algorithm that identifies facets sequentially. Moreover, we will adapt it so that it can handle noise and outliers. \\

Although we could adapt BFPI to handle noisy input matrices, we will develop in the next section a more practical algorithm that sequentially extracts the facets of $\conv(W)$, and hence will not require to identify all vertices of $\conv(X)^*$. However, we believe BFPI is important, and could be the starting point for other practical SSMF algorithms based on facet identification.


\section{Greedy FPI (GFPI)} \label{algoseq} 

The brute-force approach presented in the previous section is provably correct but may require exponentially many operations.  
Note that the same observation holds for Min-Vol: as far as we know, the algorithms that provably solve Min-Vol up to global optimally require to compute all facets of $\conv(X)$; see Section~\ref{relatedworks}. 
In this section, we propose a more practical sequential algorithm, 
dubbed Greedy FPI (GFPI), 
by leveraging highly efficient MIP solvers (in particular their ability to quickly find high quality feasible solutions).   
Although it is still computationally heavy to solve (that is, we cannot prove it runs in polynomial time), it is much more practical than BFPI and allows to solve large problems; see Section~\ref{numexp}. \\  

GFPI sequentially searches for the facets of $\conv(X)$ containing the largest number of points that lie on them. 
This section is organized as follows. 
The optimization model used to identify such a facet, even in the presence of noise and outliers, is described 
in Section~\ref{sec:facetid}. 
Once a facet is identified, the same model can be used to extract the next facet, but this requires to remove the previously identified facets from the search space (Section~\ref{sec:cutfacets}). 
To make sure the intersection of the $r$ extracted facets corresponds to a bounded polytope, we need to add a constraint when extracting the last facet (Section~\ref{sec:bounded}). 
The way the matrix $W$ is estimated from the extracted facets is described in Section~\ref{sec:intersect}. 
Finally, in Section~\ref{discuss}, we prove the identifiability of GFPI under the FBC, and discuss its computational cost and the choice of its parameters.  
	

\subsection{Identifying a facet, in the presence of noise and outliers} \label{sec:facetid}

%
%

As for GFPI, the data points are first centered and projected into a $(d-1)$-dimensional subspace 
to obtain $\tilde{X} \in \mathbb{R}^{(d-1) \times n}$ such that $0 \in \conv\big(\tilde{X}\big)$ and $\rank\big(\tilde{X}\big) = d-1$.  
Since we want GFPI to handle noisy data, we cannot use the metric of the number of points on a facet of $\conv\big(\tilde{X}\big)$ to know whether it is also a facet of $\conv\big(\tilde{W}\big)$, because points will not be exactly located on the facets of $\conv(X)$.   
Given a parameter $\gamma$ that depends on the noise level, we propose to solve 
\begin{equation} \label{indicator_first} 
\max_{\theta \in \mathbb{R}^{d-1}} \; \sum_{j=1}^{n}  \ \mathbf{I}\left(\tilde{X}(:,j)^\top\theta \geq 1-\gamma\right) \quad 
\text{ such that }  \quad 
\tilde{X}^\top \theta \leq (1+\gamma)e ,  
\end{equation} 
where $\mathbf{I}(.)$ is the indicator function which is equal to 1 if the input condition is met, and to 0 otherwise. 
The variable $\theta$ encodes the facet $\{ x \in \conv\big(\tilde{X}\big) \ | \ x^\top \theta = 1 \}$. 
The optimal solution of~\eqref{indicator_first} corresponds to a facet containing the largest number of data points within a safety gap defined by $\gamma$. 
In the noiseless case, taking $\gamma = 0$ and solving~\eqref{indicator_first} provides  a facet of $\conv(X)$ containing the largest number of columns of $X$, 
and hence it will correspond to a facet of $\conv(W)$, under Assumption~\ref{ass1}; see Lemma~\ref{lem2}. 

To solve~\eqref{indicator_first},  
we use a MIP. 
We introduce a binary variable $y_i \in \{0,1\}$ ($1 \leq i \leq n$) which is equal to 0 if $\mathbf{I}(\tilde{X}(:,i)^\top\theta \geq 1-\gamma) = 1$, and to 1 otherwise\footnote{We made this (arbitrary) choice to obtain a minimization problem, which is more standard.}, 
and solve 
\begin{equation*}  
\min_{\theta \in \mathbb{R}^{d-1}, y \in \{0,1\}^n}  \sum_{j=1}^{n} \ y_j   \quad 
\text{ such that } 1-\gamma-Ay_j \leq \tilde{X}(:,j)^\top\theta \leq 1 + \gamma \text{ for } 1 \leq j \leq n.  \nonumber 
\end{equation*}
The parameter $A$ is a sufficiently large scalar based on the {BIG-M} approach often used to cast optimization problems with indicator functions; see Remark~\ref{rem:A}. 
If the condition $\tilde{X}(:,j)^\top\theta \geq 1-\gamma$ is satisfied, the value of $y_j$ can be either 0 or 1. Since the MIP minimizes $y_j$, $y_j$ will be set to 0. If it is not satisfied, that is, $\tilde{X}(:,j)^\top\theta < 1-\gamma$, then the value of $y$ has to be equal to 1. Note that $y_j = 0$ means that the corresponding data point is located close to the sought facet. 

We have observed numerically that using the same safety gap for the $n$ constraints  $\tilde{X}^\top \theta \leq (1+\gamma)e$ does not give enough degrees of freedom to the formulation, and, in difficult scenarios, fails to return good solutions. In particular, it is unable to deal with outliers that might be arbitrarily far away from the sought polytope of which $\{ x \ | \ \theta^\top x \leq 1\}$ is a facet. 
Hence we introducing the variable $\delta \in \mathbb{R}^n_+$ that accounts for the distance of the data points  from the polytope; in particular, $\delta_j = 0$ if $\theta^\top \tilde{X}(:,j) \leq 1$. 
We propose the following MIP  
\begin{align} 
\min_{\theta,\delta \geq 0, y \in \{0,1\}^n} 
& \quad \sum_{j=1}^{n} \ y_j +\lambda \sum_{j=1}^{n} \delta_j   \nonumber   \\
\text{ such that } & \  
 1-\gamma-Ay_j \leq \tilde{X}(:,j)^\top\theta \leq 1 + \delta_j   
 \text{ for } 1 \leq j \leq n, \label{noisy_bigm} \\ 
  & \ \delta_j \leq Ay_i + \gamma \text{ for } 1 \leq j \leq n. \nonumber     
\end{align}  
The parameter $\lambda$ controls how much the points are allowed to be far away from the polytope. 
The constraint $\delta_j \leq Ay_i + \gamma$ forces the binary variable $y_j$ to get the value of 0 only when $\big|\tilde{X}(:,j)^\top\theta - 1 \big| \leq \gamma$, so that the data point is in fact close to the facet, up to the safety gap $\gamma$. 
The entries of $\delta$ larger than $\gamma$ will correspond to outliers, that is, points that are outside and far away from the sought polytope.


\begin{remark}[Value of $A$] \label{rem:A}
The BIG-M formulation is frequently used as a modeling trick for problems with disjunctive or indicator constraints; see for example~\cite{belotti2016handling} and the references therein. 
	The scalar $A$ is a parameter, and a good choice for its value depends on the data. A very large value for $A$ would lead to weak relaxations while very small values might lead to cutting off feasible solutions. Choosing a good value for $A$ is a difficult problem in MIP literature \cite{bonami2015mathematical}. 
	We have set $A$ to 10 in all the experiments in the absence of outliers and did not notice sensitivity to this value. For the experiments with outliers, we used $A = 100$; this makes sense as outliers are further away from $\conv(W)$. 
\end{remark}

\subsection{Cutting previous facets from the solution space} \label{sec:cutfacets}

Solving~\eqref{noisy_bigm} allows to approximate one  facet of $\conv\big(\tilde{W}\big)$. In order to extract other facets sequentially, 
we need to eliminate the previously found facets from the feasible solutions of~\eqref{noisy_bigm}. 
To do so, we select one point in each of the previously identified facets such that it \emph{only} belongs to the corresponding facet, that is, it needs to be in the relative interior of that facet. This point is chosen as the average of the data points associated to that facet. 
We will denote $M^{(t)} \in \mathbb{R}^{(d-1) \times t}$ the matrix whose columns correspond to these points after $t$ facets have been identified. 
At the next step, that is, at the $(t+1)$th step, 
we restrict the search space of~\eqref{noisy_bigm} by adding the following constraints making sure that these selected points do not lie on the current sought facet: 
\begin{equation} \label{cutpreviousfacets}
\theta^\top{M^{(t)}(:,i)} \ \leq 1 - \gamma - \eta \quad \text{for} \ i=1,\dots,t,
\end{equation}
where $\eta \in \mathbb{R}_+$ is a margin parameter which controls how far the next facet should be from the previously selected facets. The larger $\eta$ is, the further the facets will be from each other. 
Figure~\ref{margin} illustrates this procedure after one facet has been identified (corresponding to $\theta_1$ on the figure), in the primal and dual spaces simultaneously. As the margin parameter $\eta$ increases, more and more feasible solutions are cut from the dual $\conv(X)^*$. However, for all margin values, namely $\{0.1, 0.5, 0.8\}$, the two other vertices of $\conv(W)^*$ are not cut. 
\begin{figure}[ht!]
	\begin{center}
\includegraphics[width=0.6\textwidth]{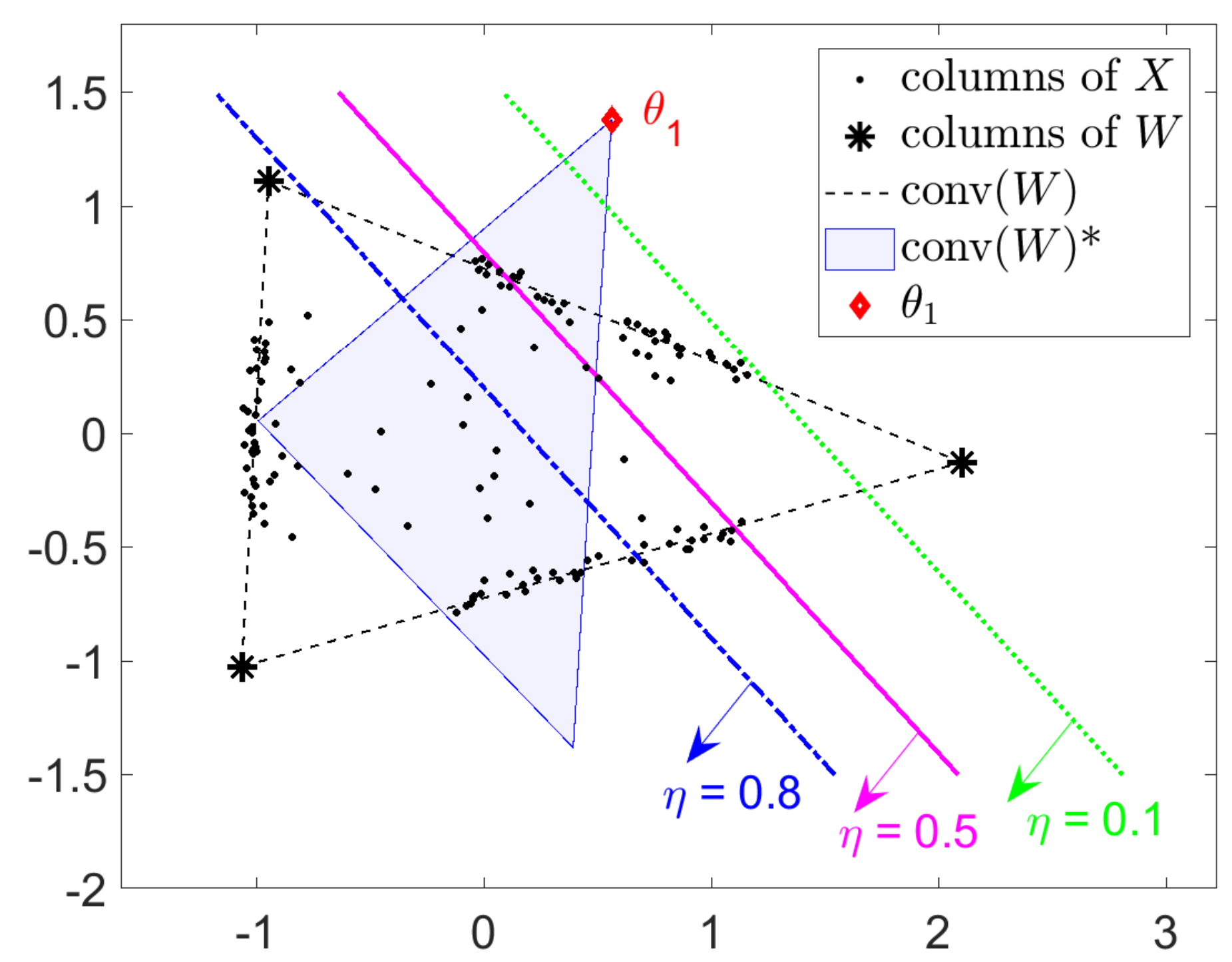}   
		\caption{Illustration of the effect of the margin parameter $\eta$ on the solution space with $r=3$.  
		\label{margin}
		} 
	\end{center}
\end{figure}
In general, if the margin value $\eta$ is set too high, there will be no feasible solution to the optimization problem and, if it is set too low, the algorithm might find a facet too close to the previously identified facets. 
	However, both cases can be prevented. If the optimizing algorithm does not find any feasible solution, the margin can be reduced. If the identified facet is not sufficiently different from the other ones, it can be increased. However, as shown in Section~\ref{app:param}, our approach is not too sensitive to this parameter.  

	\begin{remark}[Construction of $M^{(t)}$] 
	 If the data points associated to a facet are not well-spread in that facet, 
	 their average  might lie near the boundary of that facet. (Note however that, by Assumption~\ref{ass1}, the points on a facet generate that facet hence their average has to be in the relative interior of the facet.) 
	In this situation, a separable NMF algorithm, such as SPA, can be used to identify $d-1$ points well spread on this facet, and then take the average of this subset of points. 
	In this paper, we use the successive nonnegative projection algorithm (SNPA)~\cite{gillis2014successive} which is more robust to noise than SPA. 	This second strategy is useful in more difficult scenarios, and we have used it for the real-world hyperspectal images in Section~\ref{sec:hsi}. 
\end{remark}



\subsection{Obtaining a polytope} \label{sec:bounded} 

We are now able to extract sequentially facets of $\conv(X)$ that approximately contain the largest number of columns of $X$. Let us focus on the case $W$ is full column rank, that is, $\rank(W) = r$. In difficult scenarios, for example when $W$ is ill-conditioned, or the noise level is high, 
we cannot guarantee that, after having extracted $r$ facets, we will obtain a polytope (that is, a bounded polyhedron). 
In order to resolve this issue, we take advantage of the following theorems. 
 \begin{theorem}[Boundedness theorem \cite{salmani2018rotating}]
 	Let $\theta_1,\dots,\theta_d$ be $d$ linearly independent vectors in $\mathbb{R}^d$. If $\theta^{d+1} = -\sum_{i=1}^{d}\mu_i \theta^i$ with $\mu_i >0$ for $\forall i \in \{1,\dots,d\}$, then the positive hull of these $d+1$ vectors span $\mathbb{R}^d$. 
 \end{theorem}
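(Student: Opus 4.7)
The plan is direct: show that every $x \in \mathbb{R}^d$ lies in the conic hull of $\theta^1,\dots,\theta^{d+1}$. First I would use the hypothesis that $\theta^1,\dots,\theta^d$ are linearly independent in $\mathbb{R}^d$, hence form a basis, so an arbitrary $x \in \mathbb{R}^d$ can be uniquely written as $x = \sum_{i=1}^{d} c_i \theta^i$ for some real scalars $c_i$. The goal is then to rewrite $x$ as a nonnegative combination $x = \sum_{i=1}^{d+1} \alpha_i \theta^i$ with $\alpha_i \geq 0$.

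Substituting the given relation $\theta^{d+1} = -\sum_{i=1}^{d} \mu_i \theta^i$ into this combination yields
\[
\sum_{i=1}^{d+1} \alpha_i \theta^i \;=\; \sum_{i=1}^{d} \left( \alpha_i - \alpha_{d+1} \mu_i \right) \theta^i ,
\]
so matching with $x = \sum_{i=1}^{d} c_i \theta^i$ and using the uniqueness from the basis property, I need $\alpha_i = c_i + \alpha_{d+1}\mu_i$ for $i=1,\dots,d$. It then remains to choose $\alpha_{d+1} \geq 0$ large enough so that all $\alpha_i$ are nonnegative. Since $\mu_i > 0$ for every $i$, the choice $\alpha_{d+1} = \max\bigl(0,\,\max_{1\leq i\leq d}(-c_i/\mu_i)\bigr)$ works, giving $\alpha_i \geq 0$ for all $i=1,\dots,d+1$.

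There is no real obstacle: the argument is a one-shot linear-algebra calculation in which strict positivity of the $\mu_i$ is exactly what lets us solve for a nonnegative $\alpha_{d+1}$ compensating any negative $c_i$. The only points to state cleanly are (i) that linear independence of $\theta^1,\dots,\theta^d$ combined with $\dim \mathbb{R}^d = d$ makes them a basis, and (ii) that "spans $\mathbb{R}^d$" in the statement is to be interpreted as the positive hull equaling $\mathbb{R}^d$, which is exactly what the displayed computation establishes.
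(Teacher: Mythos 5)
Your proof is correct. The paper does not prove this statement at all---it is quoted verbatim from the cited reference \cite{salmani2018rotating} and used as a black box to justify the boundedness constraint \eqref{bounded}---so there is no in-paper argument to compare against. Your argument is the standard one: decompose an arbitrary $x$ in the basis $\theta^1,\dots,\theta^d$, then shift all coefficients into the nonnegative orthant by adding a sufficiently large positive multiple of $\theta^{d+1}$, which is possible precisely because every $\mu_i$ is strictly positive. The explicit choice $\alpha_{d+1} = \max\bigl(0,\max_{1\leq i\leq d}(-c_i/\mu_i)\bigr)$ does the job, and both of the points you flag (basis property from linear independence plus dimension count, and the reading of ``spans'' as the positive hull equaling $\mathbb{R}^d$) are the right ones to make explicit. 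No gaps.
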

\begin{theorem}[Full body theorem \cite{salmani2018rotating}]
	Given a set $\Theta = \{\theta_1,\dots,\theta_\ell\}$ in $\mathbb{R}^d$ and a polyhedron $\mathcal{P} = \{x | \theta_i^\top x \leq b_i; i=1,\dots,\ell\}$, the polyhedron is bounded if and only if the positive hull of the set $\Theta$ spans $\mathbb{R}^d$.
\end{theorem}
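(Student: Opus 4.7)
The plan is to prove the equivalence by identifying the recession cone of $\mathcal{P}$ with the polar of $\text{pos}(\Theta)$, and then invoking the standard fact that a non-empty polyhedron is bounded if and only if its recession cone is trivial. First, I would dispose of the degenerate case $\mathcal{P} = \emptyset$ (then $\mathcal{P}$ is trivially bounded; one can verify separately that in this situation $\text{pos}(\Theta)$ must span $\mathbb{R}^d$, e.g. via Farkas' lemma applied to the infeasible system). Otherwise, fix $x_0 \in \mathcal{P}$ and observe that the recession cone equals $\mathcal{R}(\mathcal{P}) = \{v \in \mathbb{R}^d : \theta_i^\top v \leq 0 \text{ for all } i\}$, which is exactly the polar cone of $\text{pos}(\Theta)$.

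For the direction ``$\text{pos}(\Theta) = \mathbb{R}^d$ implies $\mathcal{P}$ bounded'', suppose $v \in \mathcal{R}(\mathcal{P})$, so that $\theta_i^\top v \leq 0$ for all $i$. Since $v \in \mathbb{R}^d = \text{pos}(\Theta)$, we can write $v = \sum_{i=1}^{\ell} \alpha_i \theta_i$ with $\alpha_i \geq 0$. Taking the inner product with $v$ yields
\[
\|v\|_2^2 \;=\; \sum_{i=1}^{\ell} \alpha_i \, (\theta_i^\top v) \;\leq\; 0,
\]
forcing $v = 0$. Hence $\mathcal{R}(\mathcal{P}) = \{0\}$ and $\mathcal{P}$ is bounded.

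For the converse, I would argue the contrapositive: if $\text{pos}(\Theta) \neq \mathbb{R}^d$, then $\mathcal{P}$ is unbounded. Because $\text{pos}(\Theta)$ is finitely generated, it is a closed convex cone, so there exists a point $u \in \mathbb{R}^d$ not in it. By the hyperplane separation theorem applied to the closed convex cone $\text{pos}(\Theta)$ and the point $u$, one obtains a nonzero $v \in \mathbb{R}^d$ such that $\theta_i^\top v \leq 0$ for every generator $\theta_i$, i.e.\ $v$ lies in the polar cone. Then for all $t \geq 0$,
\[
\theta_i^\top (x_0 + t v) \;=\; \theta_i^\top x_0 + t \, \theta_i^\top v \;\leq\; b_i,
\]
so the entire ray $\{x_0 + t v : t \geq 0\}$ is contained in $\mathcal{P}$, proving unboundedness.

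The main obstacle is the closedness of $\text{pos}(\Theta)$, which is essential to extract a separating direction $v$ in the polar cone; this is classical (it can be established by Carath\'eodory's theorem for cones or by induction on $\ell$, and is equivalent to the Minkowski--Weyl theorem for finitely generated cones), but it is the nontrivial ingredient that makes the duality argument go through. Everything else reduces to linear algebra and the elementary observation that an affine ray remains feasible precisely when its direction belongs to the recession cone.
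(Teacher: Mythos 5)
The paper does not prove this statement at all: it is quoted verbatim from \cite{salmani2018rotating} and used as a black box, so there is no in-paper argument to compare against. Your proof of the substantive case ($\mathcal{P}\neq\emptyset$) is correct and is the standard duality argument: the recession cone of $\mathcal{P}$ is the polar of $\mathrm{pos}(\Theta)$, the identity $\|v\|_2^2=\sum_i\alpha_i\,\theta_i^\top v\le 0$ kills every recession direction when $\mathrm{pos}(\Theta)=\mathbb{R}^d$, and the converse correctly leans on the closedness of a finitely generated cone to extract a nonzero polar direction by separation. That closedness is indeed the one nontrivial ingredient, and you identify it as such.

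There is, however, one genuinely false step: the parenthetical claim that if $\mathcal{P}=\emptyset$ then $\mathrm{pos}(\Theta)$ must equal $\mathbb{R}^d$ ``via Farkas' lemma.'' Farkas only yields nonnegative multipliers $\lambda\ge 0$, not all zero, with $\sum_i\lambda_i\theta_i=0$ and $\sum_i\lambda_i b_i<0$; this forces $\mathrm{pos}(\Theta)$ to contain a line, not to be all of $\mathbb{R}^d$. Concretely, take $d=2$, $\theta_1=(1,0)$, $\theta_2=(-1,0)$, $b_1=b_2=-1$: then $\mathcal{P}=\{x: x_1\le-1,\ x_1\ge 1\}=\emptyset$ is (vacuously) bounded, yet $\mathrm{pos}(\Theta)$ is only the $x_1$-axis. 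So the theorem as literally stated fails for empty polyhedra, and the correct fix is not to ``verify'' the empty case but to add the hypothesis $\mathcal{P}\neq\emptyset$ (which is harmless here: in the paper's application $\mathcal{P}$ always contains $\conv(\tilde{X})$). With that hypothesis made explicit, your argument is complete.
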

To ensure that the $r$ identified facets define a bounded polytope in $\mathbb{R}^{d-1}$, we add the following constraint to~\eqref{noisy_bigm} when computing the last facet:
\begin{equation} \label{bounded}
\theta = -\sum_{i=1}^{d-1} \mu_i \theta^{(i)} 
 \quad \text{with} \quad  \mu_i \geq \epsilon \text{ for } i=1,\dots,d-1,  
\end{equation}
where $\theta^{(i)}$ ($1 \leq i \leq r-1$) are the $r-1$ vectors extracted at the first $r-1$ steps of GPFI, 
and 
 $\epsilon$ is a small positive constant. We used $\epsilon = 0.1$ in all numerical experiments in Section~\ref{numexp}. 
 
 As mentioned above, this additional constraint plays an instrumental role in difficult scenarios. 
 For example, on the real hyperspectral images from  Section~\ref{sec:hsi} that are highly contaminated with noise (and do not follow closely the model assumptions), this constraint allowed us to obtained significantly better solutions; see in particular Figure~\ref{samson_W}-(b) where one of the extracted facet does not have  many points around it: 
 its extraction was made possible because of~\eqref{bounded}. 
 Moreover, we have observed that the use of~\eqref{bounded} makes the identification of the last facet less sensitive to the  margin parameter $\eta$ as~\eqref{bounded} forces the sought facet to be far from the facets already identified.


\paragraph{Rank-deficient case}


An advantage of our proposed sequential approach is that it is not required that $\rank(W) = r$. In fact, our sequential strategy can be used to extract more than $r$ facets of $\conv(W)$ when $\rank(W) < r$; for example, in Section~\ref{sec:rankdef}, we will extract the 4 segments of a square. 
In practice, in the rank-deficient case, it is unclear how many facets need to be extracted: this depends on how many columns of $W$ need to be identified. 
In two dimensions, the number of facets of a polygon coincides with the number of vertices. However, in higher dimensions, the number of facets and vertices cannot be deduced from one another. Hence we leave to the user to decide how many facets need to be extracted. A possible heuristic would be to extract facets as long as they contain sufficiently many data points, and/or as long as the corresponding polyhedron is unbounded. We leave this as a direction of further development.

\subsection{Summary of the MIP model for facet identification}  

To summarize, GFPI will extract one facet at each iteration. At iteration $t$, it solves the following MIP:  
 \begin{align} \label{mixedinteger_fr_outlier}
 \min_{\theta \in \mathbb{R}^{d-1},\delta \in \mathbb{R}^{n}_+,y \in \{0,1\}^n} & \; \sum_{j=1}^{n} \ y_i +\lambda \sum_{j=1}^{n} \delta_j & \\
\text{ such that } 
& \tilde{X}(:,j)^\top \theta \leq 1 + \delta_j \text{ for } 1 \leq j \leq n, 
& \rightarrow \text{Forming dual space} \nonumber \\
& \tilde{X}(:,j)^\top \theta \geq 1 - \gamma - Ay_j \text{ for } 1 \leq j \leq n, 
& \rightarrow \text{Counting points on the facet} \nonumber \\
& \theta^\top M^{(t-1)}(:,k) \leq 1-\gamma - \eta \text{ for } 1 \leq k \leq t-1, 
& \rightarrow \text{Removing previous facets} 
\nonumber \\
& \delta_j \leq Ay_j + \gamma \text{ for } 1 \leq j \leq n. & \rightarrow \text{Discarding outliers}  \nonumber 
\end{align}
The optimal solution of \eqref{mixedinteger_fr_outlier} at iteration $t$ for the variable $\theta$ will be denoted $\theta^{(t)}$, it approximates the $t$th facet of $\conv\big( \tilde{W} \big)$. 
When $\rank(W) = r$, the constraint~\eqref{bounded} is added when extracting the last facet in order to make sure the polytope defined by the extracted facets is bounded; see Section~\ref{sec:bounded}.   

The proposed MIP model~\eqref{mixedinteger_fr_outlier} has been carefully designed in order to achieve state-of-the-art performances on synthetic and real-world data sets; see Section~\ref{numexp} for the numerical experiments. 
It results from a long trial-and-error procedure, and many alternative formulations have been tested. 
A direction of research is to further improve this MIP formulation. 

\subsection{Post-processing: intersection of facets} \label{sec:intersect}

Once the facets of $\conv\big( \tilde{W} \big)$ are identified, that is, the vectors $\{ \theta^{(t)} \}_{t=1}^T$  are computed sequentially using~\eqref{mixedinteger_fr_outlier}, how can we recover $\tilde{W}$ accurately, even in noisy conditions? 
We have observed that it is possible to improve the quality of the identified facets, and hence of $\tilde{W}$, by taking advantage of the knowledge of the data points associated to them. 

For the identified facet corresponding to the vector $\theta^{(t)}$ ($1 \leq t \leq T$), 
let 
\[ 
		J^{(t)} = \left\{ j \ \big| \ \left| \tilde{X}(:,j)^\top \theta^{(t)} - 1 \right| \leq \gamma \right\} 
		\]
	be the index set containing the points associated to it. The set $J^{(t)}$ contains the indices such that $y_j = 0$ when solving~\eqref{mixedinteger_fr_outlier}. 
To improve the estimate of  $\theta^{(t)}$, 
we compute the normal vector of the affine hull containing the columns of $X(:,J^{(t)})$, which is  the left singular vector corresponding to the smallest singular value of the SVD of $X(:,J^{(t)})$, after removing the average from each column (the facet is translated so that 0 belongs to it).  
	 Let us denote $\Theta \in \mathbb{R}^{d-1 \times T}$ the matrix whose columns are these singular vectors so that $\Theta(:,t)$ replaces $\theta^{(t)}$. 
The facet $t$ has the form  $\{ x \ | \ \Theta(:,t)^\top x = q_t \}$ for some offset $q_t$. Again, we compute $q_t$ from the data by taking the average dot product between the normal vector $\Theta(:,t)$ with the data points associated to that facet, that is, we take 
\[
q_t = \frac{\Theta(:,t)^\top X(:,J^{(t)}) e}{|J^{(t)}|} \; \text{ for } \; t=1,2,\dots,T. 
\] 
Finally, our estimation of the polytope $\conv\big( \tilde{W} \big)$ is given by 
$\mathcal{P} =  \{ x \ | \ \Theta^\top x \leq q \}$. 
Estimating $\tilde{W}$ from  $\mathcal{P}$ can be done using any off-the-shelf vertex enumeration algorithm. 
If $\rank(W) = r$, $\Theta \in \mathbb{R}^{(r-1) \times r}$, and each column of $\tilde{W}$ can be estimated by solving a linear system intersecting $r-1$ facets of $\mathcal{P}$. In the rank-deficient case, we have used the approach in \cite{bremner1998primal} whose 
implementation is provided in \cite{kleder2005con2vert}.

Finally, to estimate the matrix $W$, our estimated $\tilde{W}$ is projected back onto the original $m$-dimensional space, as in    Algorithm \ref{alg:bruteforce}.

%
%
%

\subsection{Identifiability, computational cost, and parameters} \label{discuss}

Algorithm~\ref{alg:gfpi} provides the pseudo-code for GFPI. The main difference with BFPI (Algorithm~\ref{alg:bruteforce}) is the way the facets of $\conv(\tilde{W})$ are extracted. Let us now discuss several important aspects of GFPI: 
its identifiability, 
the computational cost, and 
the choice of its parameters.   
\algsetup{indent=2em}
\begin{algorithm}[ht!]
	\caption{Greedy FPI (GFPI)} \label{alg:gfpi}
	\begin{algorithmic}[1]
		\REQUIRE Data matrix $X \approx WH \in \mathbb{R}^{m \times n}$ satisfying Assumption~\ref{ass1} approximately, 
		 number $T$ of facets to extract, 
		 dimension $d$, 
		 and the parameters $\gamma \geq 0$, $\eta > 0$, $\lambda > 0$, and $A > 0$.  
		
		\ENSURE Recover the basis matrix $W \in \mathbb{R}^{m \times r}$ approximately. 
		
		{\emph{\% Step 1. Preprocessing}} 
		
		\STATE Use the same preprocessing as in  Algorithm~\ref{alg:bruteforce}, to obtain $\tilde{X} = U^\top [X - \bar{X}] \in \mathbb{R}^{(d-1) \times n}$. 
		 
		 {\emph{\% Step 2. Extract the $T$ facets of $\conv\big( \tilde{W} \big)$}}  
		 
		\STATE Initialization: 
		Set $M^{(0)} = [\, ]$, 
		and $\Theta = [\, ]$.  
		
		\FOR{$t = 1, 2, \dots, T$}

		\STATE Compute $\theta^{(t)}$ as the optimal solution  of~\eqref{mixedinteger_fr_outlier}. 
		If $t = T = d$, use the additional constraint~\eqref{bounded} within~\eqref{mixedinteger_fr_outlier} to obtain a bounded polytope.  
		
		\STATE Identify the data points close to the facet corresponding to $\theta^{(t)}$, that is, 
		\[ 
		J^{(t)} = \left\{ j \ | \ \left| \tilde{X}(:,j)^\top \theta^{(t)} - 1 \right| \leq \gamma \right\}. 
		\]
		Note that $j \in J^{(t)}$ when $y_j = 0$ in~\eqref{mixedinteger_fr_outlier}.  
		
		\STATE Compute the average of these points as $m^{(t)} = \frac{\tilde{X}(:,J^{(t)})e}{|J^{(t)}|}$, 
		and let $M^{(t)} = [M^{(t-1)}, m^{(t)}]$. 
		
		\STATE Provide a more reliable estimate of $\theta^{(t)}$: 
		add as a column of $\Theta$ the left singular vector of \mbox{$\tilde{X}(:,J^{(t)}) - [m^{(t)} \dots m^{(t)} ]$} corresponding to its smallest singular value. 

       \STATE Compute the $t$th entry of the offset vector, $q_t = \frac{\Theta(:,t)^\top X(:,J^{(t)}) e}{|J^{(t)}|} = \Theta(:,t)^\top m^{(t)}$.

		\ENDFOR
		
		{\emph{\% Step 3. Recover $\tilde{W}$   }}

		\STATE Compute the columns $\tilde{W}$ as the $r$ vertices 
		of the polytope $\{ x \ | \ \Theta^\top x \leq q \}$. 
		
		If $T=d$, then $T=d=r$, and it is equivalent to solving the linear systems  $\Theta(:,\bar{k})^\top \tilde{W}(:,k) = q(\bar{k})$ for $k=1,2,\dots,r$ where $\bar{k} = \{1,2,\dots,r\} \backslash \{k\}$. 
		
		{\emph{\% Step 4. Postprocess $\tilde{W}$ to recover $W$}} 
		
		\STATE Project $\tilde{W} \in \mathbb{R}^{(d-1) \times r}$ back to the original $m$-dimensional space: $W =  U \tilde{W} + [\bar{x} \dots \bar{x}]$.  
		
	\end{algorithmic}
\end{algorithm}


\subsubsection{Identifiability in the noiseless case} 

For well-chosen parameters, GFPI recovers the unique SSMF under the FBC. 

\begin{theorem} \label{th:indentGFPI}
Let $X=WH$ satisfy the FBC (Assumption~\ref{ass1}). Let also the parameters of GFPI (Algorithm~\ref{alg:gfpi}) be as follows: 
$\gamma = 0$, 
$\eta$ is sufficiently small, 
$\lambda \rightarrow +\infty$, 
$A$ is sufficiently large, 
$T$ is the number of facets of $\conv(W)$, and $d = \rank(X)$. 
Then Algorithm~\ref{alg:gfpi} recovers the columns of $W$ (up to permutation).  
\end{theorem}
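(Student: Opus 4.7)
My plan is to show that, under the stated parameters, GFPI reduces to a greedy version of BFPI, so that Lemmas~\ref{lem1} and~\ref{lem2} and the argument of Theorem~\ref{th:identFPI} apply at each iteration. Since Step~1 of GFPI and Step~4 are identical, respectively, to the preprocessing and un-projection of BFPI, I may assume without loss of generality that $\tilde X \in \mathbb{R}^{(d-1)\times n}$ with $0 \in \conv(\tilde X)$ and $\rank(\tilde X) = d-1$. It then suffices to show that Step~2 extracts all $T$ facets of $\conv(\tilde W)$ exactly, after which Step~3 recovers $\tilde W$ unambiguously.

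First I would analyze what the MIP~\eqref{mixedinteger_fr_outlier} computes under these parameters. Taking $\lambda \to +\infty$ forces $\delta = 0$ at optimality, so $\tilde X(:,j)^\top \theta \leq 1 + \delta_j$ collapses to $\tilde X^\top \theta \leq e$, meaning $\theta \in \conv(\tilde X)^*$. With $\gamma = 0$ and $A$ larger than $\max\{|1 - \tilde X(:,j)^\top \theta| : \theta \in \conv(\tilde X)^*,\, 1 \leq j \leq n\}$ (a finite quantity since $\conv(\tilde X)^*$ is bounded), the constraint $\tilde X(:,j)^\top \theta \geq 1 - Ay_j$ enforces $y_j = 0 \iff \tilde X(:,j)^\top \theta = 1$, so minimizing $\sum_j y_j$ is equivalent to maximizing the number of columns of $\tilde X$ lying on the hyperplane $\{x \mid x^\top \theta = 1\}$. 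At iteration~$1$ (with no cutting constraints), Assumption~\ref{ass1}.d together with Lemmas~\ref{lem1} and~\ref{lem2} guarantees that the optimum is attained at a $\theta^{(1)}$ encoding a facet of $\conv(\tilde W)$ containing at least $s$ columns of $\tilde X$.

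For the inductive step, suppose $t-1$ distinct facets of $\conv(\tilde W)$ have already been extracted, with centres $m^{(1)},\ldots,m^{(t-1)}$. By Assumption~\ref{ass1}.c, at least $d-1$ of the data points on each facet span the affine hull of that facet, so the centroid $m^{(k)}$ lies in the \emph{relative interior} of that facet; thus $\theta^{(k)\top} m^{(k)} = 1$, while for any other facet of $\conv(\tilde W)$ encoded by $\theta^\star$, distinct facets of a polytope intersect only along a lower-dimensional face, which gives $\theta^{\star\top} m^{(k)} < 1$ strictly. Choosing
\[
0 < \eta < \min_{1 \leq k \leq t-1,\, \theta^\star} \bigl(1 - \theta^{\star\top} m^{(k)}\bigr),
\]
where the minimum runs over the finitely many facets $\theta^\star$ of $\conv(\tilde W)$ not yet extracted, the cutting constraints~\eqref{cutpreviousfacets} exclude exactly the previously identified $\theta^{(k)}$ while keeping every remaining facet feasible. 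The iteration~$1$ argument then shows that $\theta^{(t)}$ encodes a new facet of $\conv(\tilde W)$. When $T = d$ and $\rank(W) = r$, the boundedness constraint~\eqref{bounded} imposed at the last iteration is compatible with the true final facet for $\epsilon$ sufficiently small, since $\conv(\tilde W)$ is already a bounded polytope and a positive combination of the preceding facet normals yielding the opposite of the last one exists by the boundedness theorem.

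Finally, for Steps~3 and~4, the set $J^{(t)}$ recovered at each iteration is exactly the set of indices of columns of $\tilde X$ lying on the $t$-th facet, which by Assumption~\ref{ass1}.c spans the affine hull of that facet. The SVD of $\tilde X(:,J^{(t)})$ after centring therefore returns the exact facet normal $\Theta(:,t)$ as its unique smallest singular vector (up to sign), and $q_t = \Theta(:,t)^\top m^{(t)}$ is the corresponding exact offset. Intersecting the $T$ exact facets reconstructs $\conv(\tilde W)$, whose vertices are the columns of $\tilde W$ up to permutation, and Step~4 un-projects to recover $W$ up to permutation. The principal obstacle is the inductive step: one must verify that the centre $m^{(k)}$ of each extracted facet never lies on any other facet of $\conv(\tilde W)$, which is exactly what the generating-points clause in Assumption~\ref{ass1}.c ensures and which permits the selection of $\eta$ (and implicitly $\epsilon$) small enough to preserve every remaining true facet.
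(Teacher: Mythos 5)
Your proposal is correct and follows essentially the same route as the paper's proof: reduce the MIP under the stated parameters to the indicator-maximization problem over $\conv(\tilde X)^*$, then induct on the extracted facets using the fact that each centroid $m^{(k)}$ lies in the relative interior of its facet (so the cutting constraints remove exactly the already-found vertices of $\conv(\tilde W)^*$ for $\eta$ sufficiently small), and finally observe that the boundedness constraint does not exclude the true last facet. Your explicit bound on $\eta$ and the verification that the SVD post-processing returns the exact facet normals are slightly more detailed than the paper's treatment, but the argument is the same.
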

\begin{proof}
The preprocessing ensures that $0 \in \conv(\tilde{X})$ and $\tilde{X} \in \mathbb{R}^{(d-1) \times n}$ where $\rank(\tilde{X})=d-1$, while the geometry of the problem remains unchanged, as in Theorem~\ref{th:identFPI}. 

Let us discuss the parameters and their influence on~\eqref{mixedinteger_fr_outlier}: 
\begin{itemize}
    \item The variable $\delta$ was introduced to handle noise; see Section~\ref{sec:facetid}.  
    Taking $\lambda \rightarrow +\infty$ implies that the optimal solution for the variable $\delta$ in~\eqref{mixedinteger_fr_outlier} is 0, because $\delta = 0$ is part of many feasible solutions (take for example any $\theta$ such that $\tilde{X}^\top \theta \leq e$, such as $\theta = 0$ since $0 \in \conv(\tilde{X})$, and $y_j = 1$ for all $j$). In other words, in the noiseless case, $\delta$ can be set to zero and removed from the formulation~\eqref{mixedinteger_fr_outlier}. 
    Note that for $\delta = 0$, the first constraint of~\eqref{mixedinteger_fr_outlier} reduces to forming the dual space, that is, $\tilde{X}^\top \theta \leq e$, while the last constraints, dealing with outliers, can be removed since $A, y, \gamma \geq 0$. 
    
    \item For $A$ sufficiently large and $\gamma = 0$, the objective of~\eqref{mixedinteger_fr_outlier} is equivalent to the indicator function counting the points on the facet $\{ x \ | \ \theta^\top x = 1 \}$; see Section~\ref{sec:facetid}. 
\end{itemize}

This means that, for the chosen parameters, 
\eqref{mixedinteger_fr_outlier} is equivalent to  
\begin{equation} \label{identifnoiseless} 
\max_{\theta \in \mathbb{R}^{d-1}} \; \sum_{j=1}^{n}  \ 
\mathbf{I}\left(\tilde{X}(:,j)^\top\theta \geq 1 \right) \quad 
\text{ such that }  \quad 
\tilde{X}^\top \theta \leq e \; \text{ and } \;  {M^{(t-1)}}^\top \theta  \leq (1 - \eta) e. 
\end{equation} 

Now, let us prove the result by induction.  

\emph{First step.} Solving~\eqref{identifnoiseless} boils down to maximizing the number of data points in the set $\{ x \in \conv(\tilde{X}) \ | \ \theta^\top x = 1\}$. 
By Lemma~\ref{lem2}, this is a facet of $\conv(\tilde{W})$; in fact, it is a facet  containing the largest number of data points. 

\emph{Induction step.} Assume GFPI has extracted $k$ facets of $\conv(\tilde{W})$. 
The columns of $M^{(k)}$ are located in the relative interior of their corresponding facets. This follows from Assumption~\ref{ass1}.c because data points on that facet of $\conv(\tilde{W})$ generate that facet, and hence their average is in its relative interior. 
Because of the constraint ${M^{(k)}}^\top \theta  \leq (1 - \eta) e$, the previousy extracted $\theta^{(t)}$ ($1\leq t \leq k$) are eliminated from the solution space, that is, they are not feasible solutions of~\eqref{identifnoiseless}, because ${M^{(k)}}(:,t)^\top \theta^{(t)} = 1$ for $1\leq t \leq k$. 
Moreover, for $\eta$ sufficiently small, no other vertex of $\conv( \tilde{W} )^*$ is cut from the solution space (see Figure~\ref{margin} for an illustration). In fact, for $\eta \rightarrow 0$, only the vertices $\theta^{(t)}$ ($1\leq t \leq k$) are cut from $\conv(\tilde{X})^*$.   
Therefore the next step of GFPI  identiﬁes a facet of $\conv(\tilde{X})$ that is not extracted yet and that contains the largest possible number of data points. 
By Assumption~\ref{ass1}.c-d, this must correspond to a facet of $\conv( \tilde{W} )$.  
Note that, at the last step when $t=T$, if $T=d$, 
the constraint~\eqref{bounded} is added to~\eqref{identifnoiseless}. Since $\conv( \tilde{W} )$ is bounded, by definition, it does not prevent the model to extract the last facet of $\conv( \tilde{W} )$. It was used as a safety constraint in difficult scenarios; see Section~\ref{sec:bounded}. 
\end{proof}

In Section~\ref{sec:synth}, we will show that GFPI in fact performs perfectly in noiseless conditions under Assumption~\ref{ass1}.  
An important direction of research is to characterize the robustness to noise of GFPI. 
This is also an open problem for algorithms based on Min-Vol; see Section~\ref{relatedworks}.

\subsubsection{Computational cost} 

Identifying each facet requires to solve the MIP~\eqref{mixedinteger_fr_outlier}. 
Solving MIPs is in  general  NP-hard and can be time consuming.  In fact, the proposed model can be hard to solve up to global optimality when $n$ and/or $r$ become large.  Moreover, we have observed that, as the noise level increases, the problem gets more challenging which increases the computational time as well. 
We will use IBM-CPLEX (v12.10)~\cite{cplex2009v12} for solving the MIP~\eqref{mixedinteger_fr_outlier}. 
We noticed that CPLEX is able to find the optimal solution quite fast in many cases, even though it might require a lot of time to certify global optimality. 
Moreover, CPLEX is often able to find good feasible solutions quickly, and hence can be stopped early providing reasonable solutions for GFPI. In Section~\ref{sec:hsi}, we will use a time limit of 100 seconds for each facet identification on two large real data sets, and GFPI will provide solutions whose quality is similar to the state of the art.  
Interestingly, this observation holds even for problems with dimensions as large as 30. 
For example, in the noiseless case and for $d \leq 30$, CPLEX finds in most case the optimal solution for each facet in less than 100 seconds\footnote{For synthetic data sets, in the noiseless case, we know the optimal solution which allows us to check whether CPLEX found it. 
As shown in Appendix~\ref{appA:ccost}, for CPLEX to return the global optimal solution with a certificate takes more than one hour, even for small values of $r$ and $n$.}.  
In Appendix~\ref{appA:ccost}, we provide some additional numerical experiments on the computational cost of GFPI.  

A direction of further research would be to design dedicated algorithms (including heuristics) to tackle~\eqref{mixedinteger_fr_outlier}, taking advantage of its particular structure and geometry.

\subsubsection{Parameters} 

	 In GFPI, there are six parameters: $T$, $d$, $\gamma$, $\eta$, $\lambda$, and $A$. They were already discussed in the sections where they were introduced. 
	 Let us make two additional comments: 
	\begin{itemize}
	
	\item In most applications (for example in hyperspectral unmixing and topic modeling), we face full-dimensional problems, that is, $\rank(W) = r$, in which case $T=d=r$.  If it is not the case, a natural approach is to estimate 
 $d$ via the singular values of $X$, and stop the extraction of facets in $\Theta$ once the corresponding polyhedron $\{ x \ | \ \Theta^\top x \leq q \}$ is bounded and/or as long as enough data points are associated with the extracted facets.  
	
	\item Interestingly there is a way to select good values of the parameters $\gamma$, $\eta$ and $\lambda$ using trial and error. 
Even though GFPI looks for the facets sequentially, the final goal of GFPI is to \emph{identify enclosing facets containing the largest possible number of data points}. 
Hence, among different values of the parameters, 
one might select the ones that lead to the largest total number of data points on the $T$ extracted facets. 
		
		
		
	\end{itemize}
	
	In Appendix~\ref{appA:effectnumpts}, we provide additional numerical experiments analyzing the sensitivity of GFPI to its parameters. 
	




\section{Numerical Experiments} \label{numexp}

In this section, GFPI is evaluated on synthetic and real-world dat sets. 
All experiments are implemented in Matlab (R2019b), and run on a laptop with Intel Core i7-9750H, @2.60 GHz CPU and 16 GB RAM. 
We use IBM-CPLEX (v12.10)~\cite{cplex2009v12} for solving the MIP~\eqref{mixedinteger_fr_outlier}. The code is available from \url{https://sites.google.com/site/nicolasgillis/code}, and all experiments presented in this paper can be reproduced using this code. Note that the user can also use the Matlab MIP solver, \texttt{intlinprog},  which may be convenient. 


\paragraph{Compared Algorithms} 

GFPI is compared with the following state-of-the-art algorithms: 
\begin{itemize}

	\item Successive nonnegative projection algorithm (SNPA)~\cite{gillis2014successive}: This separable NMF algorithm is an extension of SPA. It is provably more robust to noise and can handle rank deficient matrices. 

	\item Simplex volume minimization: 
	we use the volume regularizer 
	$\logdet(W^\top W+\delta I_r)$ which has been shown to provide the best practical performances~\cite{fu2016robust, ang2019algorithms}. 
	We use the efficient algorithm based on block coordinate descent and the fast gradient method proposed in~\cite{leplat2019minimum}. It solves  
	\begin{equation} \label{eq:minvollogdet}
	\min_{W, H} \| X - WH \|_F^2 + \tilde{\lambda} \logdet(W^\top W+\delta I_r) \quad \text{ such that } H(:,j) \in \Delta^r \text{ for all } j. 
		\end{equation} 
 
	We will use different parameters for 
	$\tilde{\lambda} = \lambda 
	\frac{\| X - W^{(0)}H^{(0)} \|_F^2}{\logdet({W^{(0)}}^\top W^{(0)}+\delta I_r)}$
	where $(W^{(0)},H^{(0)})$ 
	is computed by SNPA; see~\cite{leplat2019minimum} for more details. 
	We refer to this algorithm as min vol.

	\item Maximum volume inscribed ellipsoid (MVIE)~\cite{lin2017maximum} is based on maximizing the volume of an ellipsoid within the simplex of data points; see Section~\ref{relatedworks} for more details.

	\item Hyperplane-based Craig-simplex-identiﬁcation (HyperCSI)~\cite{lin2015fast} estimates the $r$ purest samples using SPA, and calculates the hyperplanes of the enclosing simplex based on these samples; see Section~\ref{relatedworks} for more details.  
	
\end{itemize}

\paragraph{Quality measures} 

To quantify the performance of SSMF algorithms, the following metrics will be used. 
For the synthetic data experiments, we will use the relative distance between the ground-truth $W_t$ and the estimated $W$  
	\[
	\text{ERR} = \frac{||W_t \ - \ W||_F}{||W_t||_F}, 
	\]  
	where the columns of $W$ are permuted to minimize this quantity, using the Hungarian algorithm. 
	For real hyperspectral images, we will use the average mean removed spectral angle (MRSA) 
	between the columns of $W$ and $W_t$ (after a proper permutation of the columns of $W$). 
	This is the most common choice in this area of research. 
	The MRSA  between two vectors $x \in \mathbb{R}^n$ and $y \in \mathbb{R}^n$ is 
	\[
	\text{MRSA}(x,y) = \frac{100}{\pi} \cos^{-1} \bigg(\frac{(x-\bar{x} e)^\top (y-\bar{y} e)}{||x-\bar{x} e||_2||y-\bar{y} e||_2}\bigg)  , 
	\] 
	where $\bar{x} = \frac{1}{n} \sum_{i=1}^n x_i$. 
	We will also use the relative reconstruction error, RE $= \frac{||X-WH||_F}{||X||_F}$.

\subsection{Synthetic data sets} \label{sec:synth}

In this section, we compare GFPI with the state-of-the-art approaches on synthetic data sets. 

\paragraph{Data generation} 

To generate full-rank synthetic data sets $X = W_t H_t$, we follow a standard procedure; see for example~\cite{ang2019algorithms}. 
Each entry of $W_t$ is drawn uniformly at random from the interval $[0,1]$. We discard the matrices with condition number larger than $10r$ to avoid too ill-conditioned matrices. 

 We generate the columns of matrix $H_t$ by splitting them in two parts: $H_t = [H_1, H_2]$. 
 The matrix  $H_1$ corresponds to the points lying on facets, making sure there are enough points on each facet so that Assumption~\ref{ass1} holds. The matrix $H_2$ corresponds to data points randomly generated within $\conv(W)$. We generate $H_1$ and $H_2$ as follows. 

\begin{enumerate}
	\item Let $n_1$ be the number of data points on each facet. For each sample on a facet, the corresponding $r-1$ nonzero elements in the columns of $H_1$ are generated using the Dirichlet distribution with parameters 
	equal 
	to $\frac{1}{r-1}$.  
	For example, for $r=m=3$, $H_1$ has the following structure: 
	\[
	H_1=\left (
	\begin{array}{rrr|rrr|rrr}
	* & \dots & *   & *  & \dots & * &	0 &	\dots & 0\\
	* & \dots & *   & 0  & \dots & 0 & * & \dots & *\\
	\undermat{n_1}{0 & \dots & 0} & \undermat{n_1}{* & \dots & *} & \undermat{n_1}{* & \dots & *}\\\\ 
	\end{array}
	\right )
	\] 
	where `*' denotes nonzero elements generated using the Dirichlet distribution. 
	
	\item Let $n_2$ denotes the number of samples within the simplex, possibly lying on some facets but this is not strictly enforced. 
	The columns of $H_2$ are generated by the Dirichlet distribution with parameters set to $\frac{1}{r}$. 
	
	\end{enumerate} 
	
	Let us define the purity parameter $p \in (0,1]$ used to quantify how far the columns of $X$ are from the columns of $W_t$. 
	It is defined as $p(H_t) = \min_{1 \leq k \leq r} ||H_t(k,:)||_{\infty}$.  
 Recall that each row of $H_t$ corresponds to the activation of the corresponding column of $W$, while $H_t(:,j) \in \Delta^r$ for all $j$. Therefore, $p(H_t)$ indicates how much the separability assumption is violated. 
For $p(H_t)=1$, $X$ satisfies the separability assumption since each column of $W_t$ appears in the data set.  
For $p(H_t)=0$, at least one of the columns of $W$ is not used to generate $X$. 
	In order to control the purity of $H_t$, that is, $p(H_t)$, we use the parameter $p$, and 
	 resample the columns of $H_1$ and $H_2$ with entries larger than\footnote{
	To make the data generation possible, for $p \leq 0.3$, 
	we set the parameters of the Dirichlet distribution for the columns of $H_1$ to $\frac{1000}{r-1}$, otherwise most columns of $H_1$ are rejected.
	} $p$, that is, we define an upper bound on the entries of matrix $H_t$. 
	 Hence, using this resampling, $H_t(k,j) \leq p$ for all $k,j$ 
	 which implies $p(H_t) \leq p$. 
	 Note that $p$ has to be chosen larger than $\frac{1}{r-1}$ since 
	 $H(:,j) \in \Delta^r$ for all $j$, while the columns of $H_1$ have at least one zero entry.  

 Finally, the data matrix $X$ is generated by $X = W_t \ H_t$. In the presence of noise, we  use  additive Gaussian noise based on a given signal-to-noise ratio (SNR). 
 The variance of the i.i.d.\ random Gaussian noise given the SNR value is given by: 
\[ 
\text{ variance} \; = \; \frac{\sum_{i=1}^{m}\sum_{j=1}^{n}X_{i,j}^2}{10^{(SNR/10)} \times m \times n} . 
\]

\paragraph{Parameters for GFPI} 

The parameters of the proposed GFPI with respect to the noise level 
are selected according to Table~\ref{params}. 
As mentioned before, GFPI is not too sensitive to the parameter $\eta$ and we use 0.5 in all experiments. For the parameter $\lambda$, as it depends on the noise level, it should be decreased as the noise level increases; recall that $\lambda \rightarrow +\infty$ in the noiseless case (Theorem~\ref{th:indentGFPI}). 
The parameter $\gamma$ influences how the data points are associated to a facet: $X(:,j)$ is associated to the facet parametrized by $\theta$ when $|X(:,j)^\top \theta - 1| \leq \gamma$. 
Hence the larger the noise level, the larger $\gamma$ should be, since the data points will be moved further away from the facets.   
\begin{center}
	\begin{table*}[!htbp]
		\begin{center}
			\caption{Parameters of GFPI with respect to different values of SNR}
			\label{params}  
			\small\addtolength{\tabcolsep}{-1pt}
			\begin{tabular}{c||cccccc}
				\hline
				 & inf & 80 & 60 & 50 & 40 & 30 \\\hline
				$\lambda$ & 1000 & 100 & 100 & 10 & 10 & 10\\
				$\gamma$ & 0.001 & 0.01 & 0.01 & 0.05 & 0.1 &0.2 \\
				$\eta$ & 0.5 & 0.5 & 0.5 & 0.5 & 0.5 & 0.5\\
				\hline
			\end{tabular} 
		\end{center}
	\end{table*}
\end{center}

	For GFPI, 
	we have set the ``timelimit" property of CPLEX to 10 seconds. Whenever the upper bound on CPU time is activated, we specify it with ``**" after GFPI in the figures.

\subsubsection{Noiseless data sets} \label{noiseless_exp}

In this section, we investigate the effect of the purity on the performance of GFPI compared to the state-of-the-art approaches. 
To this end, we use the synthetic data with the following parameters: $n_1=30$ and $n_2=10$. 
Figure~\ref{noiseless} reports the average measure ERR over 10 randomly generated synthetic data sets obtained by the different algorithms for $r=m=\{3,4, 5,7\}$  as a function of the purity~$p$. 
  In this experiment, the value of the purity~$p$ varies between 
  $\frac{1}{r-1}+0.01$ (recall, $\frac{1}{r-1}$ is the smallest possible value) to 1 (separability). 
\begin{figure*}[htb]
	\begin{minipage}[b]{0.5\linewidth}
		\centering
		\centerline{\includegraphics[width=8cm]{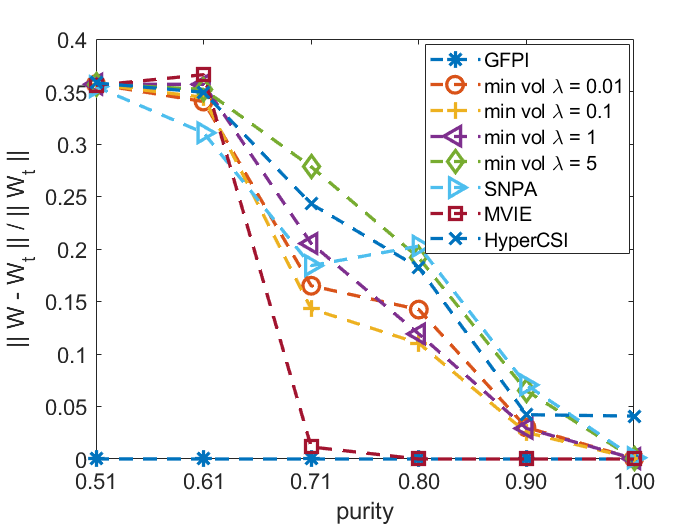}}
		\centerline{(a) $r=m=3$}\medskip
	\end{minipage}
	\hfill
	\begin{minipage}[b]{0.5\linewidth}
		\centering
		\centerline{\includegraphics[width=8cm]{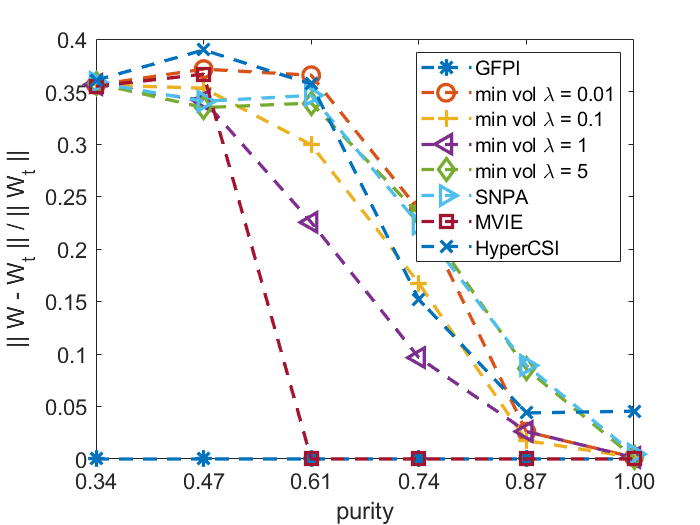}}
		\centerline{(b) $r=m=4$}\medskip
	\end{minipage}
	\hfill
	\begin{minipage}[b]{0.5\linewidth}
		\centering
		\centerline{\includegraphics[width=8cm]{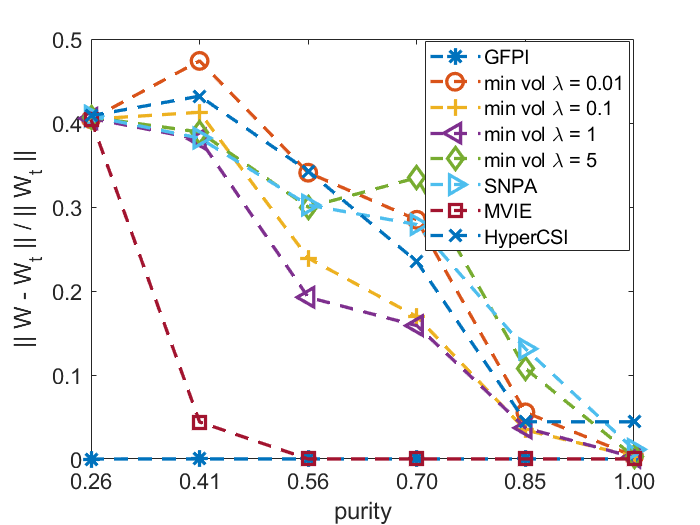}}
		\centerline{(c) $r=m=5$}\medskip
	\end{minipage}
	\hfill
	\begin{minipage}[b]{0.5\linewidth}
		\centering
		\centerline{\includegraphics[width=8cm]{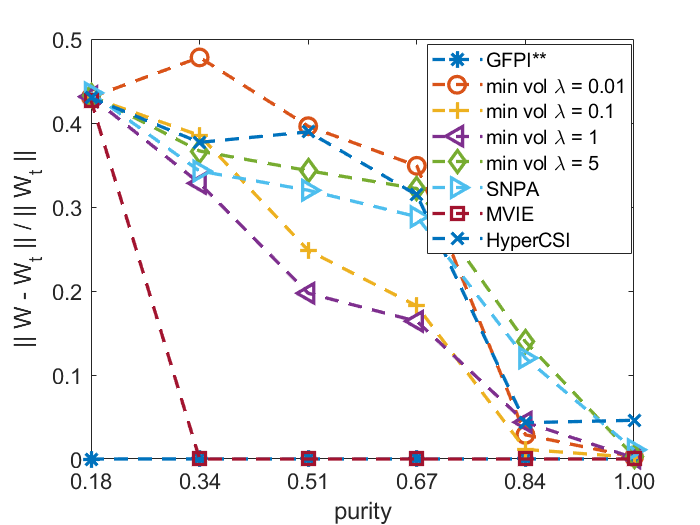}}
		\centerline{(d) $r=m=7$}\medskip
\end{minipage}
	\caption{Average ERR metric for 10 trials depending on the purity for SSMF algorithms in noiseless conditions for different values of $r$ and $m$. } 
	\label{noiseless}
\end{figure*}

GFPI recovers $W_t$ perfectly for all cases, and the performance is not dependent on the purity, as expected  since Assumption~\ref{ass1} is satisfied, regardless of the purity (Theorem~\ref{th:indentGFPI}).  
On the other hand, the performance of all other approaches gradually decreases as the purity decreases. For SNPA (which is based on the separability assumption), the performance worsens as soon as $p < 1$. 
For low levels of purity, the SSC is not satisfied, and hence the performances of min vol and MVIE degrade as $p$ decreases. 
In fact, it is interesting to observe that MVIE performs perfectly for $p$ sufficiently large, when the SSC is satisfies (as guaranteed by the theory), while min vol degrades its performances faster as it relies on heuristics and is sensitive to initialization.   
A similar behavior was already observed in~\cite{lin2017maximum}.

\subsubsection{Noisy data sets} \label{noisy_exp}

In this section, we compare the behavior of the different algorithms in the presence of noise. 
We use three levels of noise (SNR = 60, 50 and 40) and investigate the effect of the purity for $r=m=\{3,4\}$. 
Figure~\ref{noisy1} reports the ERR metric, similarly as for Figure~\ref{noiseless} (average of 10 randomly generated synthetic data sets). 
\begin{figure*}[htp]
	\begin{minipage}[b]{0.5\linewidth}
		\centering
		\centerline{\includegraphics[width=8cm]{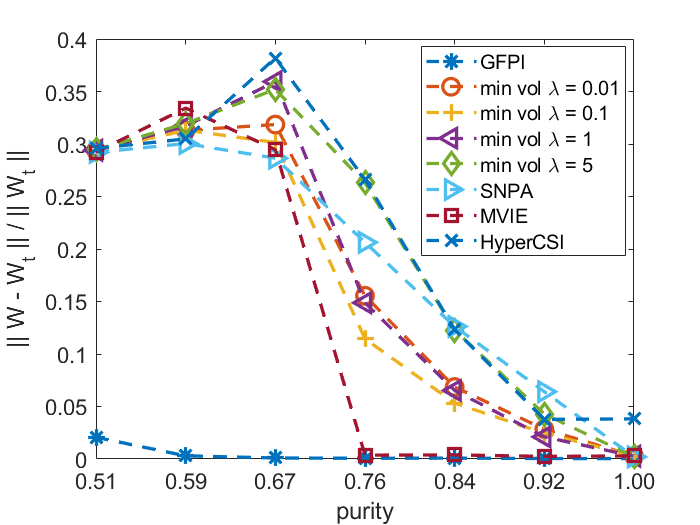}}
		\centerline{(a) $r=m=3$, SNR = 60}\medskip
	\end{minipage}
	\hfill
	\begin{minipage}[b]{0.5\linewidth}
		\centering
		\centerline{\includegraphics[width=8cm]{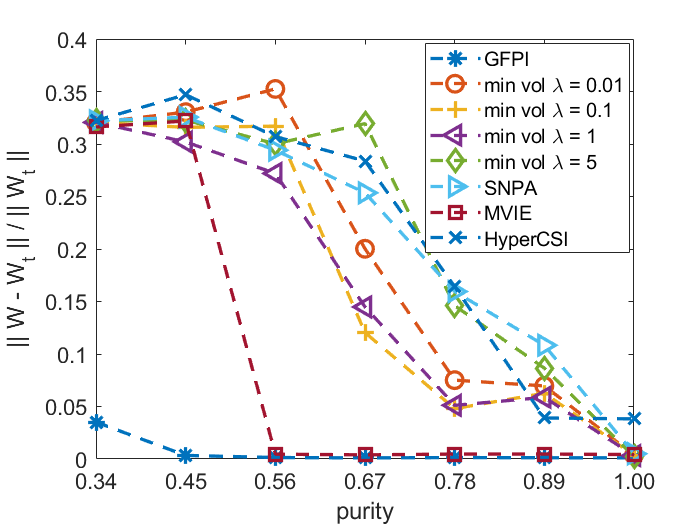}}
		\centerline{(b) $r=m=4$, SNR = 60}\medskip
	\end{minipage}
	\hfill
	\begin{minipage}[b]{0.5\linewidth}
		\centering
		\centerline{\includegraphics[width=8cm]{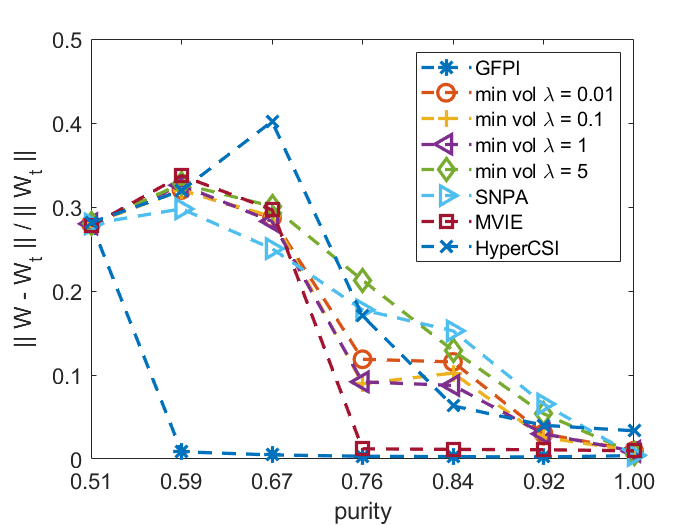}}
		\centerline{(c) $r=m=3$, SNR = 50}\medskip
	\end{minipage}
	\hfill
	\begin{minipage}[b]{0.5\linewidth}
		\centering
		\centerline{\includegraphics[width=8cm]{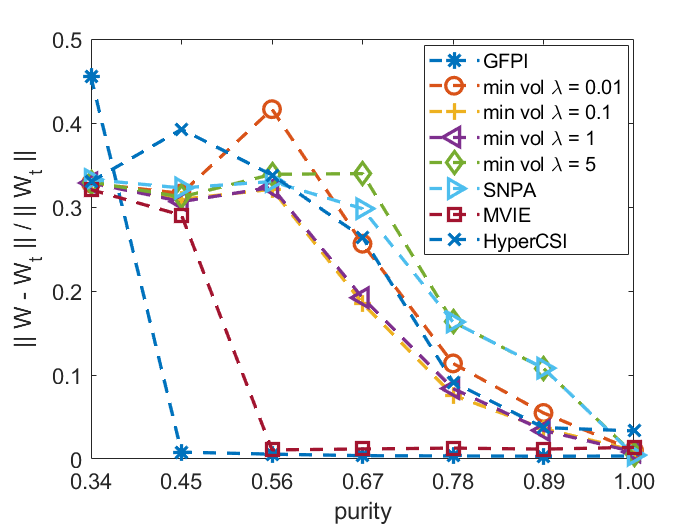}}
		\centerline{(d) $r=m=4$, SNR = 50}\medskip
	\end{minipage}
	\hfill
	\begin{minipage}[b]{0.5\linewidth}
		\centering
		\centerline{\includegraphics[width=8cm]{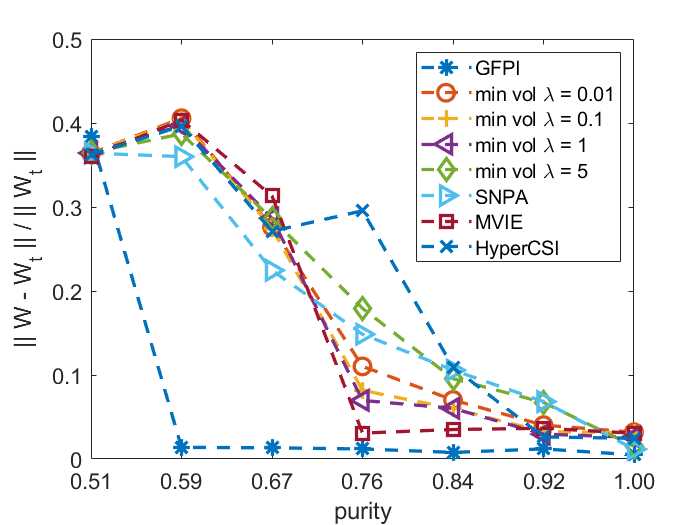}}
		\centerline{(e) $r=m=3$, SNR = 40}\medskip
	\end{minipage}
	\hfill
	\begin{minipage}[b]{0.5\linewidth}
		\centering
		\centerline{\includegraphics[width=8cm]{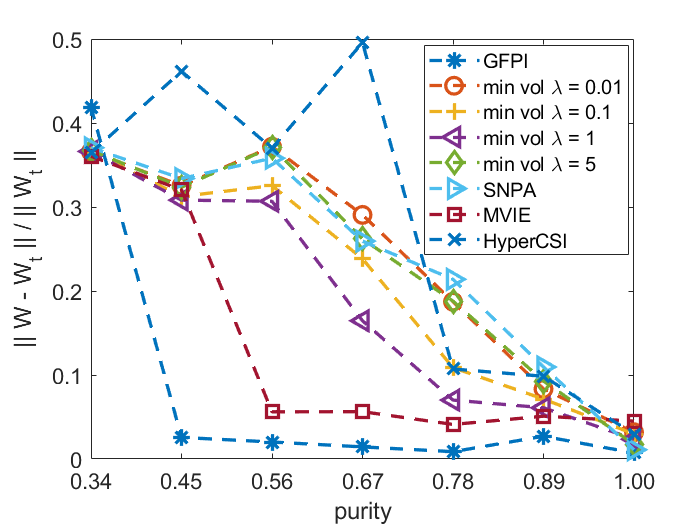}}
		\centerline{(f) $r=m=4$, SNR = 40}\medskip
	\end{minipage}
	\caption{Average ERR metric for 10 randomly generated data sets depending on purity for the different SSMF algorithms, for different noise levels: SNR of 60 (top), 50 (middle) and 40 (bottom), 	and for  $m=r=3$ (left) and $m=r=4$ (right).}
	\label{noisy1}
\end{figure*} 
As the noise level increases (SNR decreases), the performance of all algorithms decreases steadily. However, in almost all cases, GFPI outperforms all other approaches, especially when the  the purity $p$ is low. 
As for the noiseless case, MVIE performs the second best.

To further understand the performance of GFPI in presence of noise and under different purity levels, Figure~\ref{noisy2} reports the average ERR metric over 10 trials for very low purity values (namely $p \in [1/(r-1)+0.01,1/(r-1)+0.1]$) depending on the noise level for $m=r=\{3,4\}$. 
 Note that all other algorithms fail in this range of purity; see Figure \ref{noisy1}. 
 \begin{figure*}[htp]
	\begin{minipage}[b]{0.5\linewidth}
		\centering
		\centerline{\includegraphics[width=7cm]{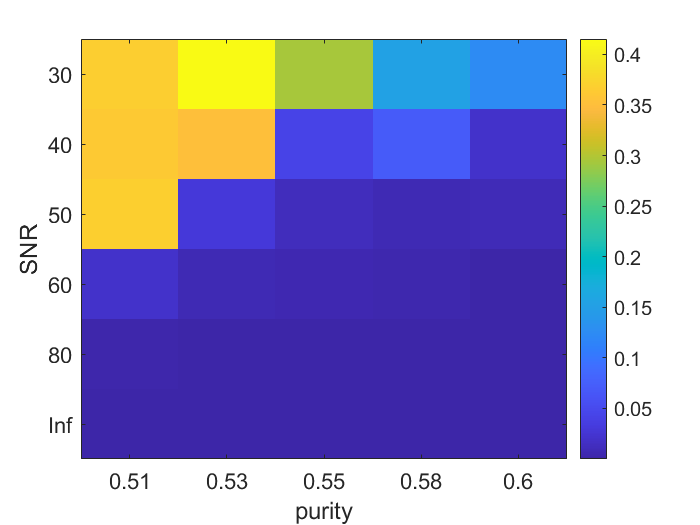}}
		\centerline{(a) $r=m=3$}\medskip
	\end{minipage}
	\hfill
	\begin{minipage}[b]{0.5\linewidth}
		\centering
		\centerline{\includegraphics[width=7cm]{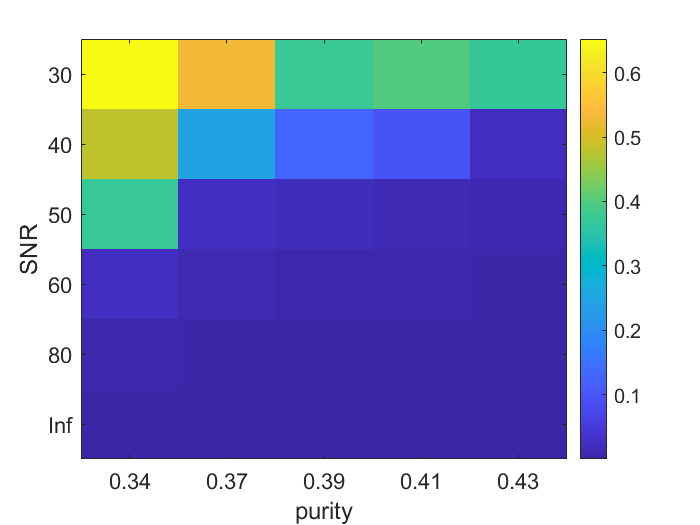}}
		\centerline{(b) $r=m=4$}\medskip
	\end{minipage}
	\caption{ERR values depending on the purity $p$ and the SNR  
	for GFPI.}
	\label{noisy2}
\end{figure*} 
We observe that for SNR $=\infty$ (noiseless case), the performance is independent of the value of purity. 
For $\text{SNR} \leq 50$, ERR gradually increases as the SNR decreases. 
As $p$ decreases, the data points are located closer to the center of the facets, and hence it is more challenging to recover the  facets in the presence of noise. Note also that, quite naturally, the robustness of GFPI depends on the number of points per facet; see Appendix~\ref{appA:effectnumpts}.

\subsubsection{Rank-deficient SSMF} \label{sec:rankdef}

An advantage of GFPI is that it provably works when $W$ does not have full column rank, 
and without the separability assumption. 
Note that 
\begin{itemize}

    \item SNPA works in the rank-deficient case, but requires the separability assumption. 
    Other separable NMF algorithms also work in the rank-deficient case; for example~\cite{arora2012computing, recht2012factoring, gillis2014robust} but are computationally much more demanding than SNPA as they rely on solving $n$ linear programs in $n$ variables.

    \item The min-vol model ~\eqref{eq:minvollogdet} can be used in the rank-deficient case~\cite{leplat2019minimum}. However, it does not come with identifiability guarantees (this is actually an open problem). 
    
\end{itemize}
MVIE and HyperCSI are not applicable when $\rank(W) < r$. 

In this section, we confirm the ability of GFPI to recover $W$ when it does not have full column rank. 
To do so, we use the rank-deficient synthetic data from~\cite{leplat2019minimum}. 
It generates the matrix $X \in \mathbb{R}^{4 \times 200}$ using the rank-deficient matrix 
\[
W_t = \left (
\begin{array}{rrrr}
1 & 1 & 0 & 0\\
0 & 0 & 1 & 1\\
0 & 1 & 1 & 0\\
1 & 0 & 0 & 1
\end{array}
\right ), 
\]
for which $\rank(W_t)=3 < r=4$. 
Each column of $H_t \in \mathbb{R}^{4 \times 200}$ is generated using the Dirichlet distribution with parameters equal to 0.1. The columns of $H$ with elements larger than a predefined purity value $p$ are resampled, as before. 
In this experiment, we consider three values for the purity, namely 0.8, 0.7 and 0.6. 
We take $X = W_t H_t$ and then corrupt it with i.i.d.\ 
Gaussian distribution with zero mean and standard deviation set to 0.01. 
GFPI parameters are $\lambda=10$, $\eta = 0.5$, $\gamma=0.05$, and $A=10$. 
Note that $\lambda$ is relatively large since there are not outliers and the noise level is low. 

Figure~\ref{rankdeficient} shows the result, after projection of the data points in two dimensions. 
Table~\ref{rankdef_tab} reports the ERR metric for the different algorithms.  
\begin{figure*}[htb]
	\begin{minipage}[b]{0.3\linewidth}
		\centering
		\centerline{\includegraphics[width=6.5cm]{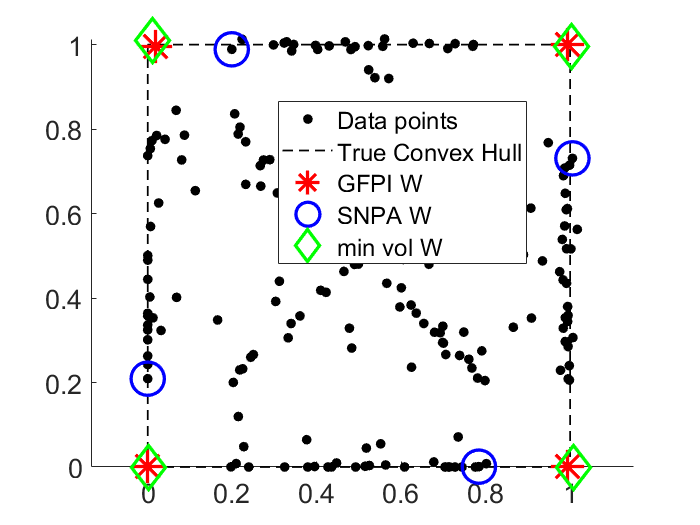}}
		\centerline{(a) purity = 0.8}\medskip
	\end{minipage}
	\hfill
	\begin{minipage}[b]{0.3\linewidth}
		\centering
		\centerline{\includegraphics[width=6.5cm]{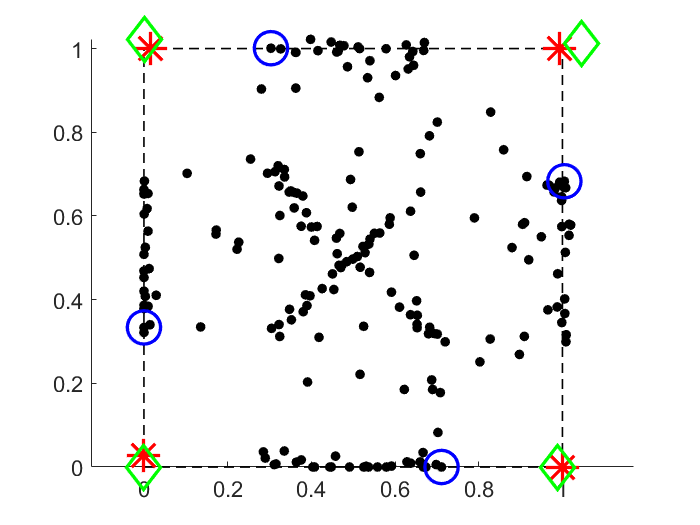}}
		\centerline{(b) purity = 0.7}\medskip
	\end{minipage}
	\hfill
	\begin{minipage}[b]{0.3\linewidth}
		\centering
		\centerline{\includegraphics[width=6.5cm]{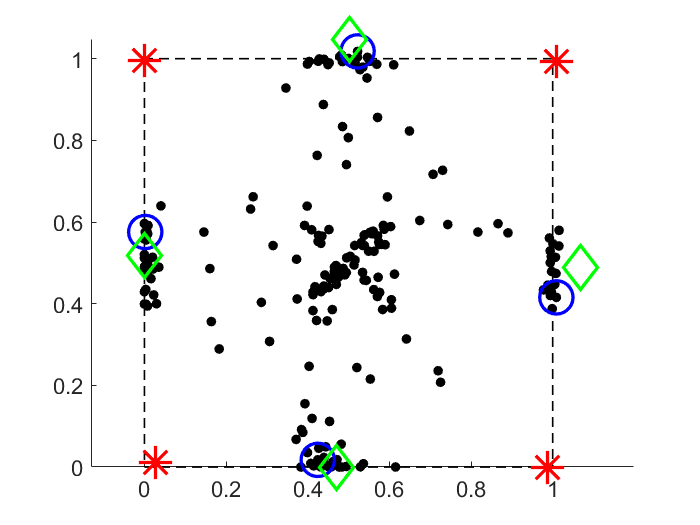}}
		\centerline{(c) purity = 0.6}\medskip
	\end{minipage}
	\caption{Two dimensional representation of the estimated vertices in rank-deficient cases with different values of purity. }
	\label{rankdeficient}
\end{figure*}
\begin{center}
	\begin{table*}[!ht]
		\begin{center}
			\caption{
			Comparing ERR of SNPA, min vol, and GFPI in dealing with synthetic rank-deficient data. 
			\label{rankdef_tab}} 
			 
			\small\addtolength{\tabcolsep}{-1pt}
			\begin{tabular}{c|ccc}
				\hline
				purity & SNPA & min vol & GFPI \\ \hline
				0.8 & 0.219 & 0.014 & 0.010 \\
				0.7 & 0.315 & 0.029 & 0.018 \\
				0.6 & 0.429 & 0.485 & 0.017 \\
				\hline				
			\end{tabular} 
		\end{center}
	\end{table*}
\end{center}

Since the data is not separable, SNPA provides the worst solutions. 
For $p \in \{0.7, 0.8\}$, min vol performs well, although slightly worse than GFPI. 
For $p = 0.6$, min vol fails to extract columns of $W_t$, as the purity is not large enough. However, it recovers a reasonable solution with smaller volume; this is a similar behavior as in Figure~\ref{unique}.

\subsubsection{Performance in the presence of outliers} \label{sec:outliersexp}

In this section we investigate the ability of GFPI to deal with outliers. As mentioned earlier, as far as we know, most SSMF algorithms 
are very sensitive to outliers (in particular, most separable NMF algorithms, min vol, MVIE and HyperCSI).  
To do so, we generate the clean data by considering $m=r=3$, $p=1$ 
(no resample of the columns of $H_t$ so $p(H_t)$ is close to 1), 
$n_1 = 30$, $n_2 = 10$ data points (for a total of 100 clean samples), and SNR = $\infty$. 
We then add outliers whose entries are drawn from the uniform distribution in $[0,1]$.
GFPI parameters are $\lambda=0.01$, $\eta = 0.5$, $\gamma=0.01$, and $A=100$.  
The parameter $\lambda$ is chosen relatively small allowing $\delta$ to take larger values, which is necessary in the presence of outliers. 

Figure~\ref{outlierex} reports the results on four different examples, with 3, 10, 50 and 100 outliers (red crosses). 
It shows the columns of $W$ and their corresponding convex hulls estimated by the different algorithms. 
In all cases, GFPI perfectly recovers the true endmembers, while the other algorithm fail. 
In fact, even few outliers affects their performance whereas  GFPI tolerates as many outliers as the number of clean samples.  The reason for this robustness to outliers is that outliers are generated randomly, and hence no more than $d-1$ outliers belong to the same hyperplane (with probability one);  
in this example, no combination of three outliers belong to the same segment.  
Of course, adding adversarial outliers on the same hyperplane would lead to different results. However, as long as the number of outliers on the same hyperplane is smaller than the number of points on the facets of $\conv(W)$, GFPI will perform well. 

\begin{figure*}[ht!]
	\begin{minipage}[b]{0.5\linewidth}
		\centering
		\centerline{\includegraphics[width=\textwidth]{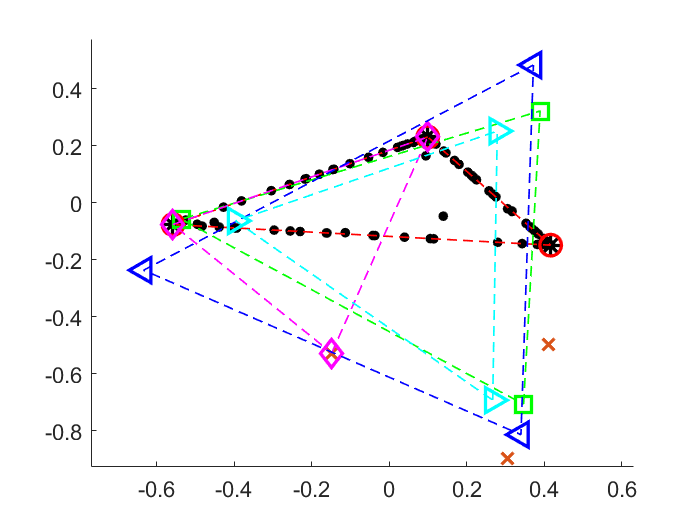}}
		\centerline{(a) 3 outliers}\medskip
	\end{minipage}
	\hfill
	\begin{minipage}[b]{0.5\linewidth}
		\centering
		\centerline{\includegraphics[width=\textwidth]{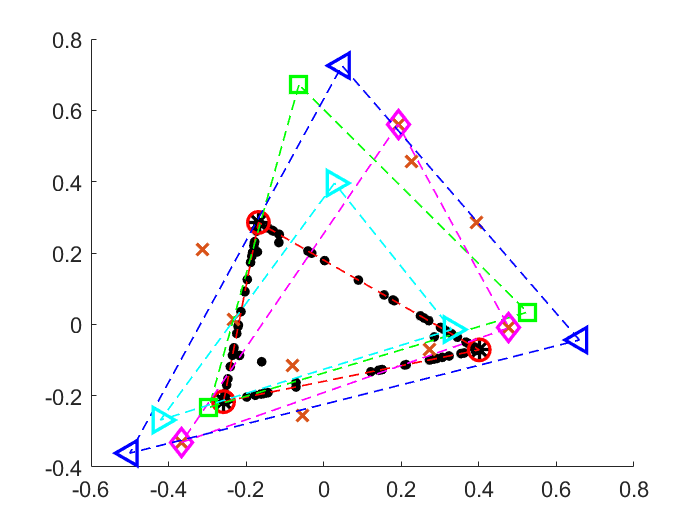}}
		\centerline{(b) 10 outliers}\medskip
	\end{minipage}
	\hfill
	\begin{minipage}[b]{0.5\linewidth}
		\centering
		\centerline{\includegraphics[width=\textwidth]{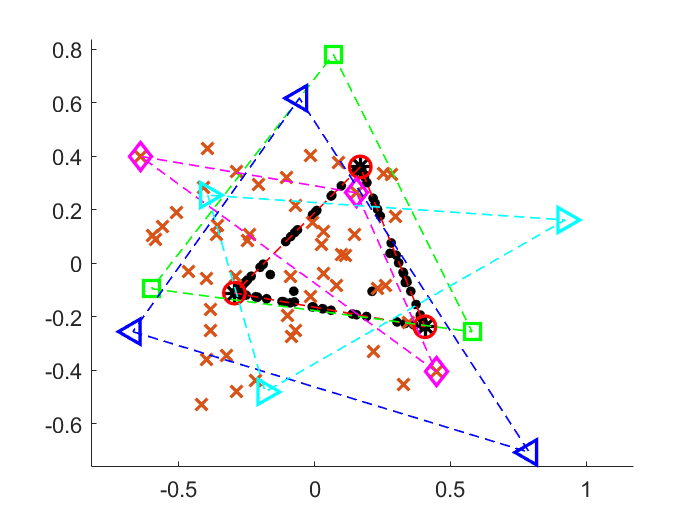}}
		\centerline{(c) 50 outliers}\medskip
	\end{minipage}
	\hfill
	\begin{minipage}[b]{0.5\linewidth}
		\centering
		\centerline{\includegraphics[width=\textwidth]{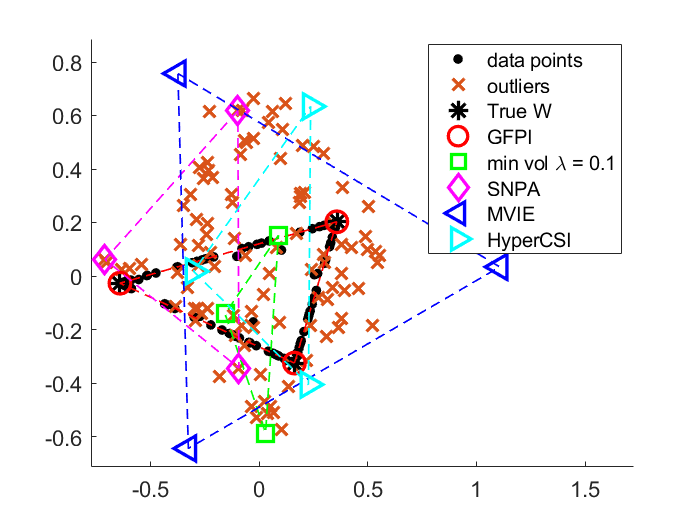}}
		\centerline{(d) 100 outliers}\medskip
	\end{minipage}
	\caption{Comparison of SSMF algorithms in the presence of outliers.}
	\label{outlierex}
\end{figure*}

\newpage 

\subsection{Hyperspectral images} \label{sec:hsi}

In this section, we evaluate the performance of GFPI on two widely used hyperspectral images, namely Samson and Jasper Ridge; see~\cite{zhu2017hyperspectral} and the references therein. 
These hyperspectral images are relatively large, containing thousands of pixels. 
Hence we set the \emph{timelimit} of CPLEX for optimizing each facet  to 100 seconds. 
We will provide the MRSA for the extracted factors by the different SSMF algorithms. It is important to note that the ground truth factor $W_t$ are actually unknown, and these estimates come from~\cite{zhu2017hyperspectral}. 
Moreover, the reported result for min vol are the best possible performance with highly tuned parameters from~\cite{ang2019algorithms}. 
Once the matrix $W$ is estimated, we estimate the matrix $H$ by solving  
\begin{align} \label{obtainH}
\min_{H \in \mathbb{R}^{r \times n}} ||X-WH||_F^2 \quad \text{ such that } \quad H(:,j) \in \Delta^r \text{ for all } j, 
\end{align}
which is a convex linearly constrained least squares problem.  We use the code from~\cite{gillis2014successive}.

\subsubsection{Samson}

The Samson data set consists of $95 \times 95$  images for 156 spectral bands~\cite{zhu2017hyperspectral}. 
Mostly three materials are present in this image: ``soil", ``water" and ``tree", and hence $r=3$. 
We run GFPI to extract three endmembers with parameters: 
$T=d=3$, 
$\lambda=0.1$, $\gamma=0.3$, $\eta=0.7$ and $A = 10$.  

The extracted spectral signatures and the corresponding abundance maps are shown in Figure~\ref{samson_W}~(a) and~(c), respectively. 
To interpret GFPI geomerically, 
Figure~\ref{samson_W}~(b) shows the data points and the polytope computed by GFPI, projected onto a two-dimensional subspace spanned by the first two components of the PCA of the input matrix.  

\begin{figure*}[ht!]
	\begin{minipage}[b]{0.5\linewidth}
		\centering
		\centerline{\includegraphics[width=8cm]{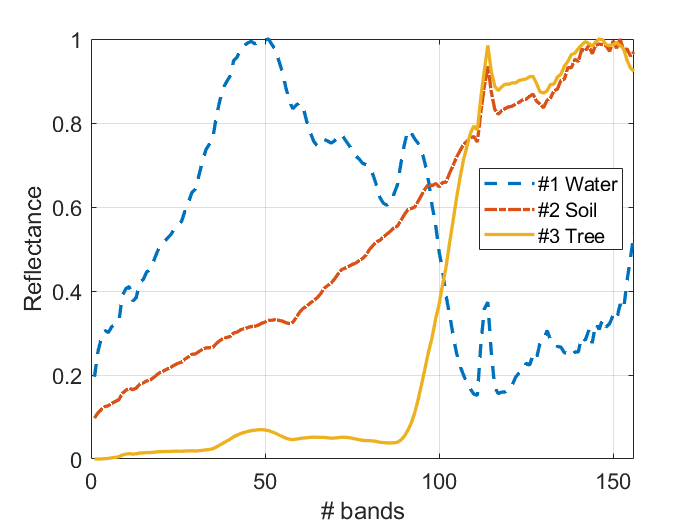}}
		\center{(a) Estimated spectral signatures of the three endmembers, that is, columns of $W$.}\medskip
	\end{minipage}
	\hfill
	\begin{minipage}[b]{0.5\linewidth}
		\centering
		\centerline{\includegraphics[width=8cm]{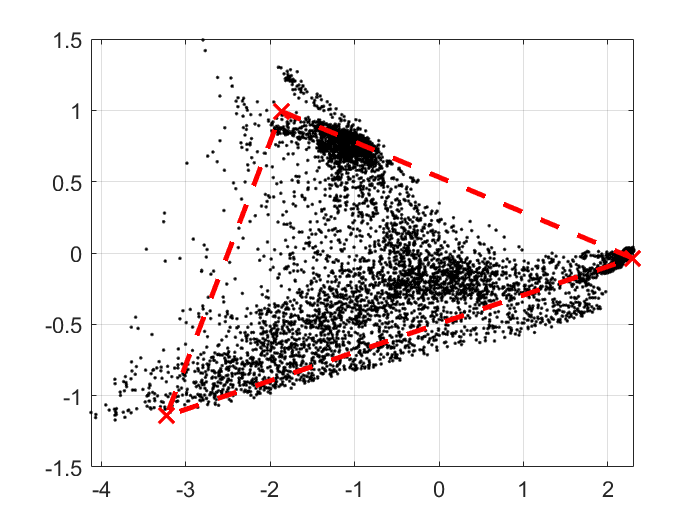}}
		\center{(b) Projection onto a two-dimensional subspace of the data points (dots), and the polytope computed by GFPI (crosses connected by dashed lines). }\medskip
	\end{minipage}
	\hfill
	\begin{minipage}[b]{1\linewidth}
		\centering
		\centerline{\includegraphics[width=15cm]{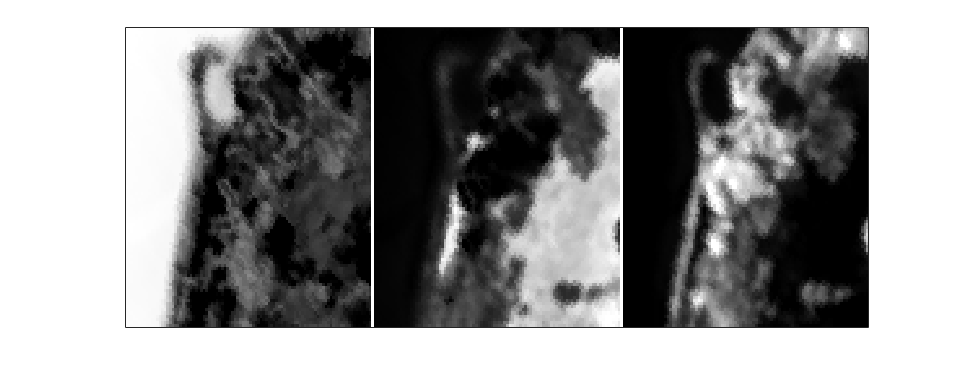}}
		\center{(c) Abundance maps corresponding to the estimated $W$ by GFPI. 
		Each abundance map corresponds to a reshaped row of $H$. From left to right: water, soil and tree. 
		}\medskip
	\end{minipage}
	\caption{GFPI applied on the Samson hyperspectral image. 
	\label{samson_W}
	} 
\end{figure*}

Table~\ref{samsont} reports the MRSA and RE for GFPI, SNPA, min vol, and HyperCSI. 
Note that MVIE is computationally too expensive and we excluded it from the comparison. 
GFPI performs similarly to SNPA and slightly worse than min vol. 
HyperCSI has the worst performance among the four.  
However, this illustrates that CPLEX finds {good} feasible solutions for the proposed MIP relatively fast. 
\begin{center}
	\begin{table*}[!htbp]
		\begin{center}
			\caption{Comparing the performances of GFPI with HyperCSI, SNPA and min vol on Samson data set}
			\label{samsont}  
			\small\addtolength{\tabcolsep}{-1pt}
			\begin{tabular}{c||ccccc}
				\hline
				& SNPA  & min vol & HyperCSI & GFPI \\\hline
				MRSA & 2.78 & 2.58 & 12.91 & 2.97 \\
				$\frac{||X-WH||_F}{||X||_F}$ & 4.00\% & 2.69\% & 5.35\% & 4.02\% \\
				\hline
			\end{tabular} 
		\end{center}
	\end{table*}
\end{center}

\subsubsection{Jasper Ridge} 

The Jasper Ridge data set consists of $100 \times 100$  images for 224 spectral bands~\cite{zhu2017hyperspectral}. 
Mostly four materials are present in this image: ``road", ``soil", ``water" and ``tree". 
We run GFPI to extract four endmembers with parameters: 
$T=d=4$, 
$\lambda=0.0001$, $\gamma=0.2$, $\eta=0.5$ and $A = 10$.  
Note that $\lambda$ is rather small, much smaller than for Samson ($\lambda=0.1$). 
Because such data sets are very noisy and violate the model assumptions, 
GFPI is more sensitive to its parameters which should be carefully tuned (note that it is also sensitive to the time limit used in CPLEX, and hence to the power of the computer it is run on).  
However, although we have fine-tuned GFPI parameters for these real-world experiments, it provides good solutions for a different values of the parameters. For example, we also obtain reasonable solutions for $\lambda \in [0.01, 0.0001]$.  

 The extracted spectral signatures and the corresponding abundance maps are shown in Figure~\ref{jasper_W}~(a) and~(c), respectively. 
 Similar to the Samson data set, 
 the two dimensional representation of the data points and the estimated polytope are shown in Figure~\ref{jasper_W}~(b). 
 
 \begin{figure*}[ht!]
	\begin{minipage}[b]{0.5\linewidth}
		\centering
		\centerline{\includegraphics[width=8cm]{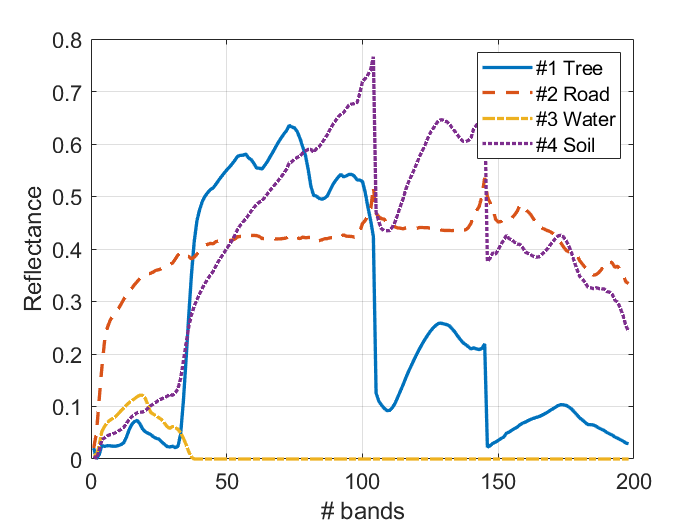}}
		\center{(a) Estimated spectral signatures of the four endmembers, that is, columns of $W$.}\medskip
	\end{minipage}
	\hfill
	\begin{minipage}[b]{0.5\linewidth}
		\centering
		\centerline{\includegraphics[width=8cm]{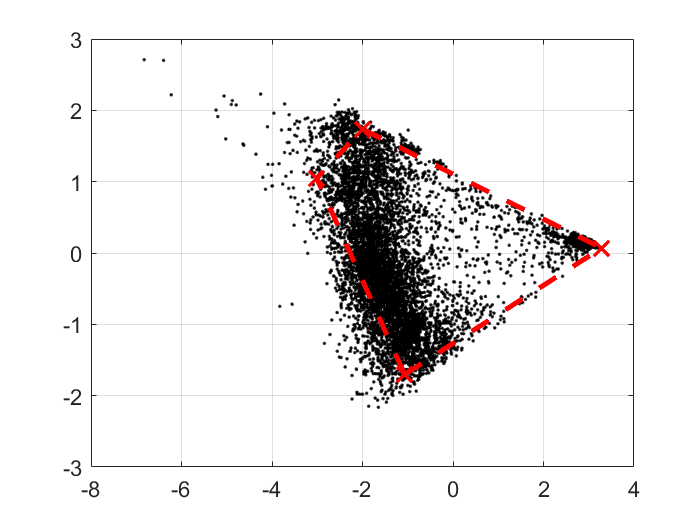}}
		\center{(b) Projection onto a two-dimensional subspace of the data points (dots), and the polytope computed by GFPI (crosses connected by dashed lines). }\medskip
	\end{minipage}
	\hfill
	\begin{minipage}[b]{1\linewidth}
		\centering
		\centerline{\includegraphics[width=17cm]{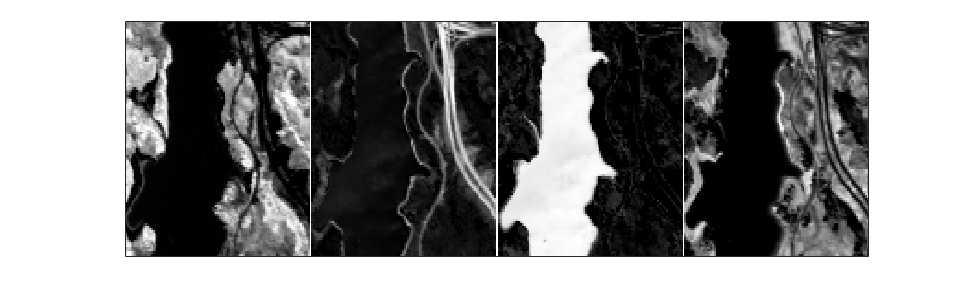}}
		\center{(c) Abundance maps corresponding to the estimated $W$ by GFPI. 
		Each abundance map corresponds to a reshaped row of $H$. From left to right: 
		tree, road, water, soil. 
		}\medskip
	\end{minipage}
	\caption{GFPI applied on the Jasper ridge hyperspectral image. 
	\label{jasper_W}
	} 
\end{figure*}

 Table~\ref{jaspert} reports the MRSA and RE. We observe that GFPI has the lowest (best) MRSA value and second best RE among the four algorithms. 
\begin{center}
	\begin{table*}[!htbp]
		\begin{center}
			\caption{Comparing the performances of GFPI with HyperCSI, SNPA and min vol on Jasper database}
			\label{jaspert}  
			\addtolength{\tabcolsep}{-1pt}
			\begin{tabular}{c||ccccc}
				\hline
				& SNPA  &  min vol & HyperCSI & GFPI \\\hline
				MRSA & 22.27 & 6.03 & 17.04 & 4.82 \\
				$\frac{||X-WH||_F}{||X||_F}$ & 8.42\% & 6.09\% & 11.43\% & 6.47\% \\
				\hline
			\end{tabular} 
		\end{center}
	\end{table*}
\end{center}

Note that it is natural for min vol to have the lowest RE as it is part of its objective function. Having a low RE for GFPI is a side result of $W$ being well estimated. In particular, GFPI is able to discard outliers (see Section~\ref{sec:outliersexp}) which may increase the RE significantly because this measure is very sensitive to outliers (least squares).  
Once $W$ is estimated by GFPI, the RE, or other quality measures, could be used to assess whether GFPI provided a reasonable solution (in fact, GFPI never uses this quantity as a criterion for estimating $W$). This would be another way to fine tune the parameters of GFPI.

\section{Conclusion} \label{conc}

In this paper, we have presented a new framework for simplex-structured matrix factorization (SSMF). The high level idea is to identify the facets of the convex hull of the basis matrix $W$ by looking for facets of the convex hull of the data matrix $X = WH$ containing the largest number of points. 
We first proved that under our facet-based conditions (FBC, see Assumption~\ref{ass1}), SSMF is identifiable, that is, it has a unique solution $W$, up to permutation of the columns (Theorem~\ref{th:identifSSMFFBC}). 
Then, we proposed and analyzed brute-force facet-based polytope identification (BFPI) 
which converts the problem of searching for the facets to the problem of identifying the vertices in the dual space.  BFPI recovers the ground truth $W$ under the FBC  (Theorem~\ref{th:identFPI}). 
We also proposed GFPI (greedy FPI) which sequentially identifies the facets (instead of identifying them all) using MIPs, and comes with identifiabiliy guarantees (Theorem~\ref{th:indentGFPI}). In order to handle noise and outliers, we have proposed a very effective MIP to tackle the subproblem for identifying a facet. We have also proposed an effective postprocessing step to improve the recovery of $W$ by reestimating the facets using the data points associated to them.  
We illustrated the effectiveness of GFPI compared to state-of-the-art SSMF algorithms. 
 GFPI is able to handle highly mixed data points for which the conditions under which the other algorithm work are highly violated (namely, separability and the SSC). It is also able to handle many outliers, and rank-deficient matrices $W$. We also provided encouraging numerical experiments on real-world hyperspectral images. GFPI is applicable to large data sets because the MIPs do not need to be solved up to global optimality: any solution returned by the solver can be used by GFPI to construct a facet.

Directions of further research include the 
identifiability of GFPI in presence of noise and outliers, 
the design of more effective MIP formulations to identify the facets, 
and the improvement of the scalability of GFPI for large-scale data sets (for example by designing dedicated algorithms to solve the MIPs).

\small

\bibliographystyle{spmpsci} 
\bibliography{sparseNMF} 

\normalsize 

\newpage 

\appendix

\section{Appendix} \label{appA}

In this section, we provide additional experiments to further investigate properties of GFPI.

\subsection{Effect of the number of data points on the facets} \label{appA:effectnumpts} 

In this section, we study the effect of number of points per facet of $\conv(W)$ on GPFI for different noise levels for two different purity values, namely $0.6$ and $1$. The number of points on each facet is $\{r, 2\times r, 4\times r, 10 \times r, 20 \times r\}$. The experiments are carried out for $m=r=\{3,4\}$. The average ERR for 10 trials are reported in 
Figure~\ref{numberpoints}. 

We observe that, 
as the SNR decreases, more data points on each facet improves the solution quality. This makes sense: as the number of points per facet increases, the robustness to noise of GFPI is improved. 


\begin{figure*}[htb]
	\begin{minipage}[b]{0.5\linewidth}
		\centering
		\centerline{\includegraphics[width=7cm]{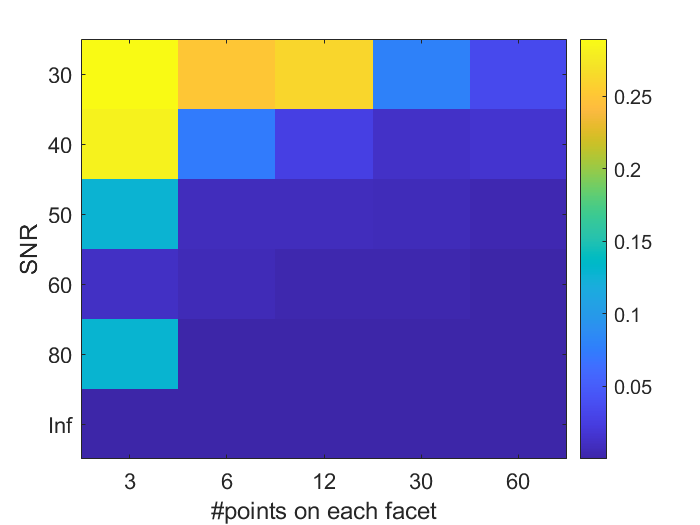}}
		\centerline{(a) $r=m=3$, purity = 0.6}\medskip
	\end{minipage}
	\hfill
	\begin{minipage}[b]{0.5\linewidth}
		\centering
		\centerline{\includegraphics[width=7cm]{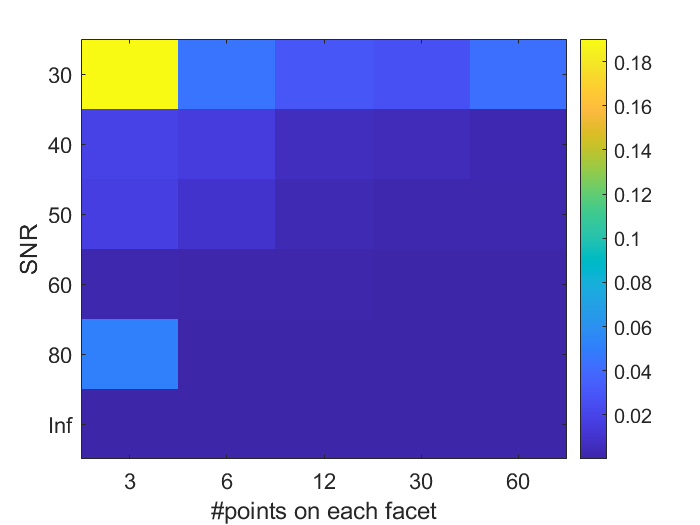}}
		\centerline{(b) $r=m=3$, purity = 1}\medskip
	\end{minipage}
	\hfill
	\begin{minipage}[b]{0.5\linewidth}
		\centering
		\centerline{\includegraphics[width=7cm]{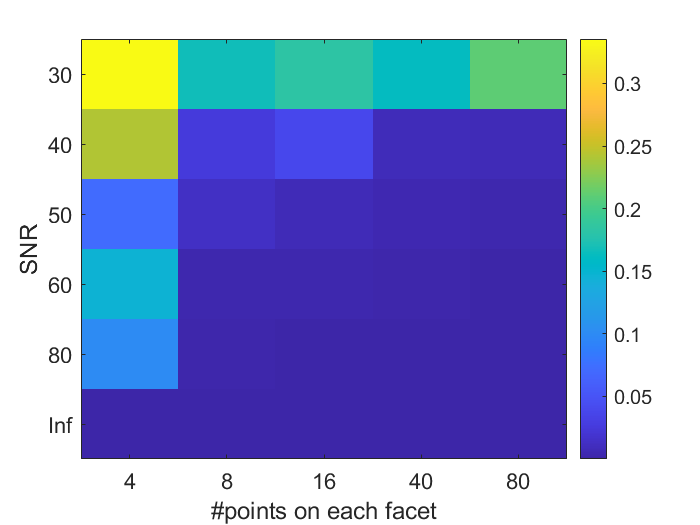}}
		\centerline{(c) $r=m=4$, purity = 0.6}\medskip
	\end{minipage}
	\hfill
	\begin{minipage}[b]{0.5\linewidth}
		\centering
		\centerline{\includegraphics[width=7cm]{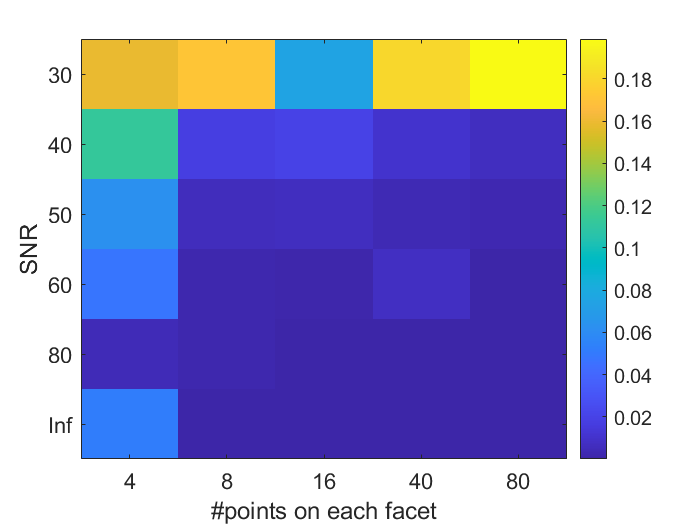}}
		\centerline{(d) $r=m=4$, purity = 1}\medskip
	\end{minipage}
	\caption{The Effect of number of points per facets on the ERR metric.}
	\label{numberpoints}
\end{figure*}

\subsection{Computational cost of GFPI} \label{appA:ccost}

The proposed GFPI is based on optimizing a MIP 
 for identifying each facet which can be time consuming. 
 In this section, we report the computational time for the proposed approach with respect to different parameters: 
 the number of points on each facet ($n_1$), $r$ ($m$), 
 the noise level, and the purity. 
 We consider 5 different values for $n_1$ and report the run time of our approach under different settings for other parameters. Table~\ref{tab:runningtime} reports the average running time (CPU time in seconds) for three trials for  $\{p = 1, \text{SNR} = \infty\}$, 
 $\{p = 0.6, \text{SNR} = \infty\}$ , 
 $\{p = 0.6, \text{SNR} = 40\}$, and 
 $\{p = 1, \text{SNR} = 40\}$, respectively. 
 NA indicates that the algorithm did not return a solution with global optimality guarantee within 3600 seconds although the solution returned by CPLEX could be optimal. 
\begin{center}
	\begin{table*}[ht!]
	\begin{center}
			\caption{Running time in seconds for GFPI for different values of $r$, SNRs, 
			and purity~$p$. 
			The total number of data points is $n = n_1 \times r$. 
			An entry NA in the table means that GFPI took more than 1 hour to certify global optimality of the MIPs~\eqref{mixedinteger_fr_outlier} for all facets, we used a timelimit for CPLEX of $3600/r$ seconds. 
			}
			\label{tab:runningtime}  
			\small\addtolength{\tabcolsep}{-1pt}
			\begin{tabular}{c||ccccc}
				\hline
				$p = 1$, SNR = $\infty$ & $n_1 = 10$ & $n_1 = 30$ & $n_1 = 50$ & $n_1 = 100$ & $n_1 = 200$ \\\hline
				$r = 3$ & 0.75 & 0.73 & 2.81 & 44.53 & NA \\
				$r = 4$ & 0.61 & 5.48 & 44.66 & NA & NA \\
				$r = 5$ & 1.61 & 49.82 & NA & NA & NA\\
				$r = 7$ & 13.30 & NA & NA & NA & NA \\
				\hline
			\end{tabular} 
		\end{center}
	\begin{center}
			\small\addtolength{\tabcolsep}{-1pt}
			\begin{tabular}{c||ccccc}
				\hline
			   $p = 0.6$, SNR = $\infty$  & $n_1 = 10$ & $n_1 = 30$ & $n_1 = 50$ & $n_1 = 100$ & $n_1 = 200$ \\\hline
				$r = 3$ & 0.59 & 0.71 & 1.28 & 27.1 & 449.90  \\
				$r = 4$ & 0.55 & 3.40 & 29.46 & 609.18 & NA \\
				$r = 5$ & 1.35 & 39.76 & 653.19 & NA & NA \\
				$r = 7$ & 10.53 & NA & NA & NA & NA \\
				\hline
			\end{tabular} 
		\end{center}
		\begin{center}
			\small\addtolength{\tabcolsep}{-1pt}
			\begin{tabular}{c||ccccc}
				\hline
			$p = 1$, SNR = 40	& $n_1 = 10$ & $n_1 = 30$ & $n_1 = 50$ & $n_1 = 100$ & $n_1 = 200$ \\\hline
				$r = 3$ & 0.67 & 2.15 & 8.11 & 282.54  & NA \\
				$r = 4$ & 0.87 & 18.05 & 606.67 & NA & NA \\
				$r = 5$ & 3.37 & NA & NA & NA & NA \\
				$r = 7$ & 119.56 & NA & NA & NA & NA \\
				\hline
			\end{tabular} 
		\end{center}
		\begin{center}
			\small\addtolength{\tabcolsep}{-1pt}
			\begin{tabular}{c||ccccc}
				\hline
			$p = 0.6$, SNR = 40	& $n_1 = 10$ & $n_1 = 30$ & $n_1 = 50$ & $n_1 = 100$ & $n_1 = 200$ \\\hline
				r = 3 & 0.62 & 1.89 & 12.45 & 998.29 & NA \\
				r = 4 & 0.66 & 12.01 & 155.02 & NA & NA \\
				r = 5 & 2.94 & 516.09 & NA & NA & NA \\
				r = 7 & 103.89 & NA & NA & NA & NA \\
				\hline
			\end{tabular} 
		\end{center}
	\end{table*}
\end{center}

As $n$, $r$ and the noise level increase, the computational time increases rapidly. 
Even though CPLEX might require a lot of time to get optimality guarantees,  
it is usually able to find the optimal solution quite fast. In particular, for the noiseless case, for a problem with $n=1000$ (total number of points on all facets), each facet for $r=3$, $r=5$ and $r=7$ can be recovered accurately in less than 2, 2, and 10 seconds, respectively, although providing a global optimality certificate might take more than one hour, as shown in  Table~\ref{tab:runningtime}.

\subsection{Effect of the parameters $\eta$, $\lambda$ and $\gamma$ of GFPI} \label{app:param}

In Section \ref{discuss}, we claimed that the values for the parameters in GFPI can be selected 
via trial and error by keeping the solution with the largest number of data points on its facets. 
Figure~\ref{margin_vs_lambda} illustrates this observation for the parameters $\eta$ and $\lambda$. We use the previously introduced the synthetic data sets with $n_1 = 30$ and $n_2 = 10$. We set $m=r=3$, $SNR = 40$ and $\gamma = 0.1$.  We consider $\eta = \{0.1,0.3,0.5, 0.7, 0.9\}$ and $\lambda = \{0.5,1.5,2.5,3.5,4.5\}$. 

Figure~\ref{margin_vs_lambda} reports the average 
ERR metric (left), and average total number of points on all facets (right) for these values of $\eta$ and $\lambda$ over 10 trials.  
\begin{figure*}[htb]
	\begin{minipage}[b]{0.45\linewidth}
		\centering
		\centerline{\includegraphics[width=7cm]{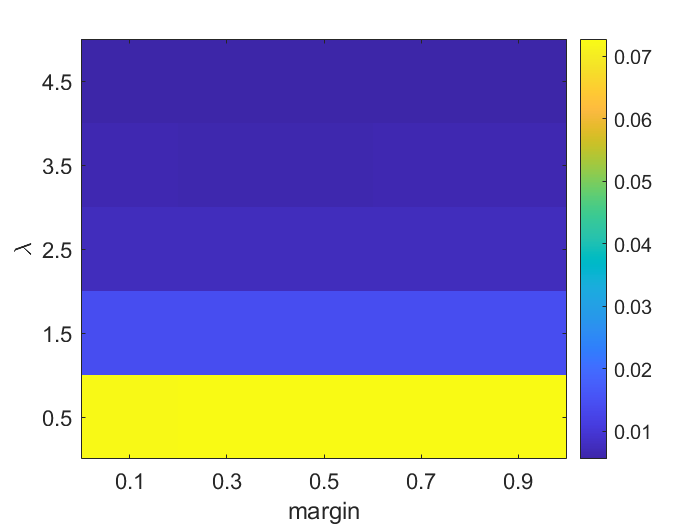}}
		\centerline{(a) ERR metric.}\medskip
	\end{minipage}
	\hfill
	\begin{minipage}[b]{0.45\linewidth}
		\centering
		\centerline{\includegraphics[width=7cm]{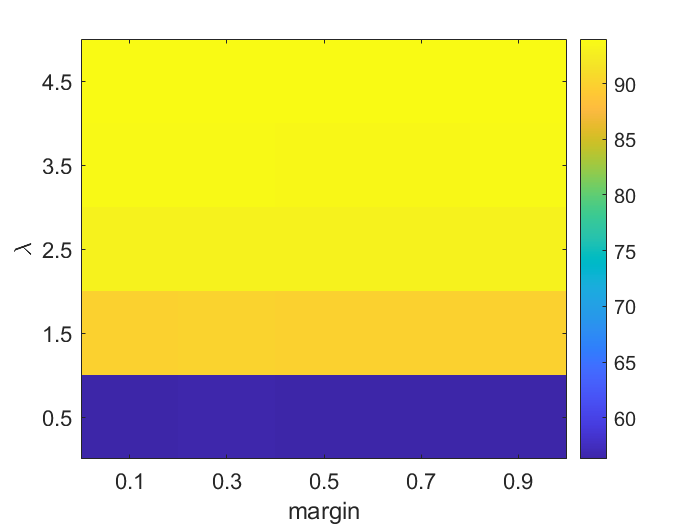}}
		\centerline{(b) Total number of points on all facets.}\medskip
	\end{minipage}
	\caption{The effect of different values for $\lambda$ and $\eta$ on relative error (a) and total number of points on facets (b). The correspondence between these two plots illustrates that the total number of points on the identified facets can be used a valid metric to  select the parameters. 
	\label{margin_vs_lambda}
	}
\end{figure*}

We observe that, as expected, there is a one-to-one correspondence between relative error and total number of points on the facets: low values of relative error are associated with high values of number of points on facets. 
We have performed  the same experiment for the case $p = 1$ and similar observations apply. 
Note that based on discussion in Section~\ref{sec:cutfacets}, 
 the margin is adapted automatically in our implementation. 
 Hence, the algorithm does not show much sensitivity to the value of the margin. \\

As stated in the paper, the parameter $\gamma$ has a  physical interpretation and can be determined based on the estimation of the noise. However, with no prior knowledge on the level of noise, setting this parameter might not be trivial. The question is therefore: 
\emph{When is the performance of GFPI sensitive to the value of $\gamma$?} 
Let us consider two examples for two different values of purity, namely: $p=0.55$ and $p=0.9$. 
For both cases, we generate samples using our generative model with $n_1=30, n_2=10$, $m=r=3$, SNR $=40$.  
We consider $\gamma = 0.1, 0.2$. 
Figure~\ref{gamma} shows the results.  We observe that, 
for a low purity ($p=0.55$), 
noisy data can lead to ambiguity in recovering the enclosing polytope, and the value of $\gamma$ may play a crucial role. 
A larger value of $\gamma$ means a larger safety gap around facets, and hence noisier facets (which are not facets of the ground truth simplex) get selected for $\gamma = 0.2$. In fact, for $\gamma = 0.2$, the data points on two different facets end up being associated with a single facet. 
For larger purity values ($p=0.9$), the performance is less sensitive to the value of $\gamma$.
\begin{figure*}[h]
	\begin{minipage}[b]{0.5\linewidth}
		\centering
		\centerline{\includegraphics[width=\textwidth]{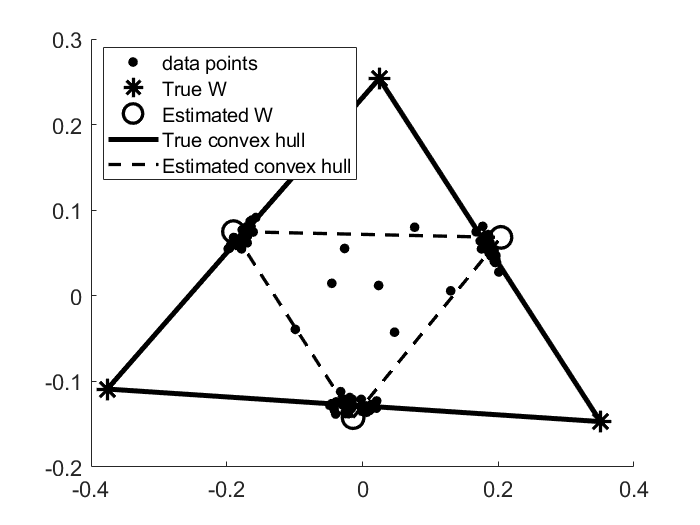}}
		\centerline{(a) purity = 0.55, $\gamma$ = 0.2}\medskip
	\end{minipage}
	\hfill
	\begin{minipage}[b]{0.5\linewidth}
		\centering
		\centerline{\includegraphics[width=\textwidth]{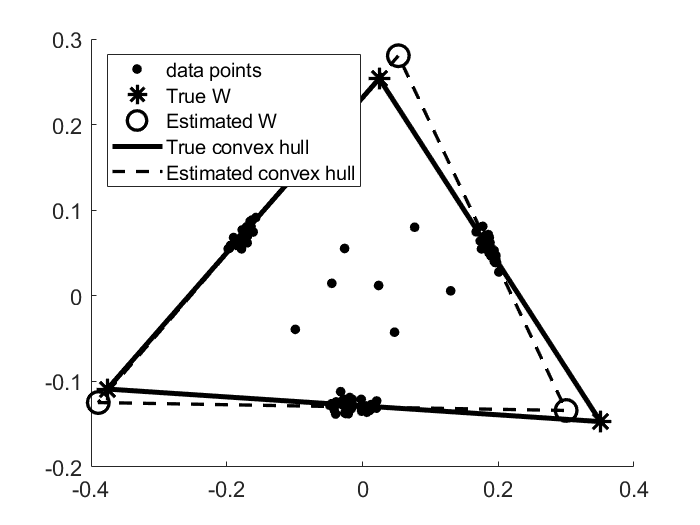}}
		\centerline{(b) purity = 0.55, $\gamma$ = 0.1}\medskip
	\end{minipage}
	\hfill
	\begin{minipage}[b]{0.5\linewidth}
		\centering
		\centerline{\includegraphics[width=\textwidth]{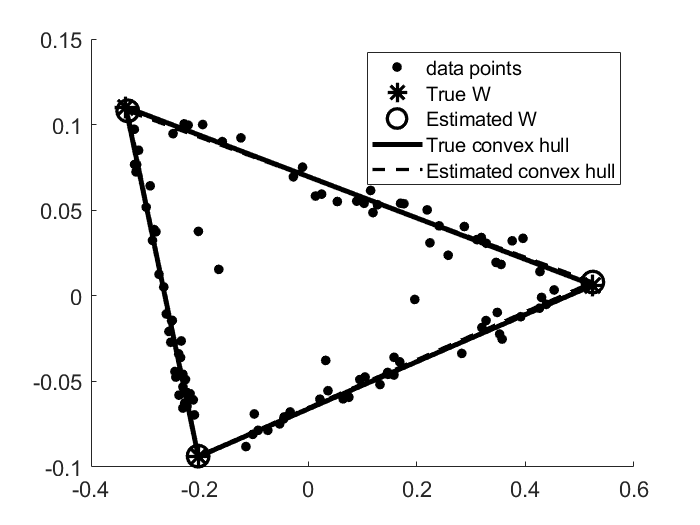}}
		\centerline{(c) purity = 0.9, $\gamma$ = 0.2}\medskip
	\end{minipage}
	\hfill
	\begin{minipage}[b]{0.5\linewidth}
		\centering
		\centerline{\includegraphics[width=\textwidth]{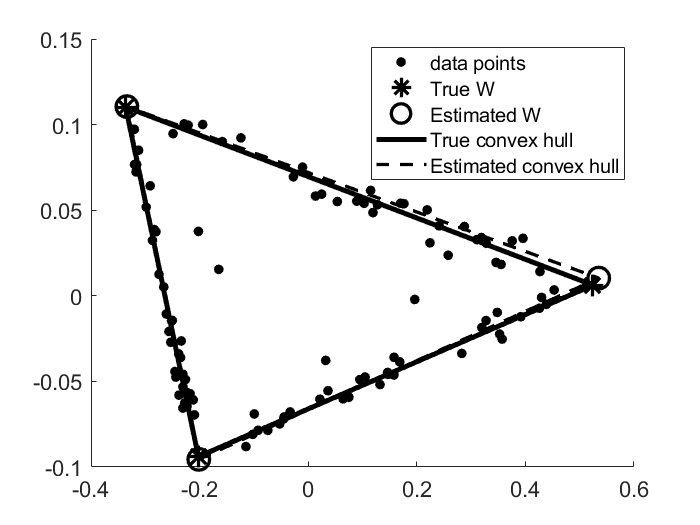}}
		\centerline{(d) purity = 0.9 \& $\gamma$ = 0.1}\medskip
	\end{minipage}
	\caption{Performance of GFPI depending on the purity and the value of $\gamma$, in noisy conditions.} 
	\label{gamma}
\end{figure*}

\end{document}